%% file: main.tex
\documentclass[11pt,letterpaper]{article}
\makeatletter
\def\input@path{{supporting/}}
\makeatother
\usepackage[margin=1in]{geometry}
\usepackage{natbib}
\setcitestyle{authoryear,round,citesep={;},aysep={,},yysep={;}}
\usepackage[T1]{fontenc}
\usepackage{amsmath,amssymb,booktabs,tikz,array}
\usepackage{amsthm}
\usepackage{graphicx}

\usepackage{listings}

\lstdefinestyle{lean}{
  basicstyle=\small\ttfamily,
  columns=fullflexible,
  keepspaces=true,
  frame=none,
  showstringspaces=false,
  literate=
    {α}{{$\alpha$}}1
    {ℕ}{{$\mathbb{N}$}}1
    {→}{{$\to$}}1
    {∀}{{$\forall$}}1
    {↔}{{$\leftrightarrow$}}1
    {∈}{{$\in$}}1
    {⊆}{{$\subseteq$}}1
    {≤}{{$\leq$}}1
    {∧}{{$\land$}}1
    {∉}{{$\notin$}}1
    {∃}{{$\exists$}}1
}

\newtheorem{result}{Theorem}
\newtheorem{suppthm}{Theorem}[subsection]
\newtheorem{supplem}[suppthm]{Lemma}
\newtheorem{suppcor}[suppthm]{Corollary}
\newtheorem{suppprop}[suppthm]{Proposition}
\theoremstyle{definition}

\theoremstyle{remark}

\theoremstyle{plain}
\usepackage[hidelinks]{hyperref}

\title{GenLimitLib: A Formal Library for Language Generation in the Limit and AI-Assisted Mathematical Research}
\author{Shuangping Li \\
  Yale University \\
  {\small\texttt{shuangping.li@yale.edu}}
  \and
  Peng Zhang \\
  Rutgers University \\
  {\small\texttt{pz149@cs.rutgers.edu}}}
\date{}
\begin{document}
\maketitle

\begin{abstract}
We present \textbf{GenLimitLib}, a source-aligned Lean 4 library for \emph{language generation in the limit}. Introduced by Kleinberg and Mullainathan at NeurIPS 2024, language generation in the limit studies a theoretical question motivated by LLMs: how to generate valid new strings from observed examples. This young and rapidly evolving field offers a natural testbed for studying large-scale formalization.
GenLimitLib contains formal developments for 30 papers. It extracts shared definitions and reusable proof components while preserving paper-specific assumptions and statements, and records relationships across papers. In this way, GenLimitLib provides a concrete and structured view of the literature. We show through mathematical case studies and LLM experiments how our library can support both human mathematical research and AI-assisted research.

\paragraph{Library:} \url{https://github.com/pengzhang91/generation-in-the-limit-lib}
\end{abstract}

\section{Introduction}

How should we study a young and rapidly evolving research area? Unlike mature fields with stable textbook treatments, its structure may still be unclear. For example: What are the common objects? Which assumptions are essential? Which proof ideas are reusable? How do results across papers fit together? Answering these questions is itself part of understanding the field.

We study \emph{Lean formalization} as one way to address this problem.
Formalizing a research literature forces definitions to be made precise and hidden assumptions to be stated explicitly.
During this process, we can ask whether similar concepts across papers are truly the same, which proof components can be reused, and whether results from different papers can be composed.
Common structures and reusable proof components can then be organized into shared layers and used in future research.
In this way, formalization provides a structural view of an evolving research literature.

\begin{figure}[tbp]
    \centering
    \includegraphics[width=0.55\textwidth]{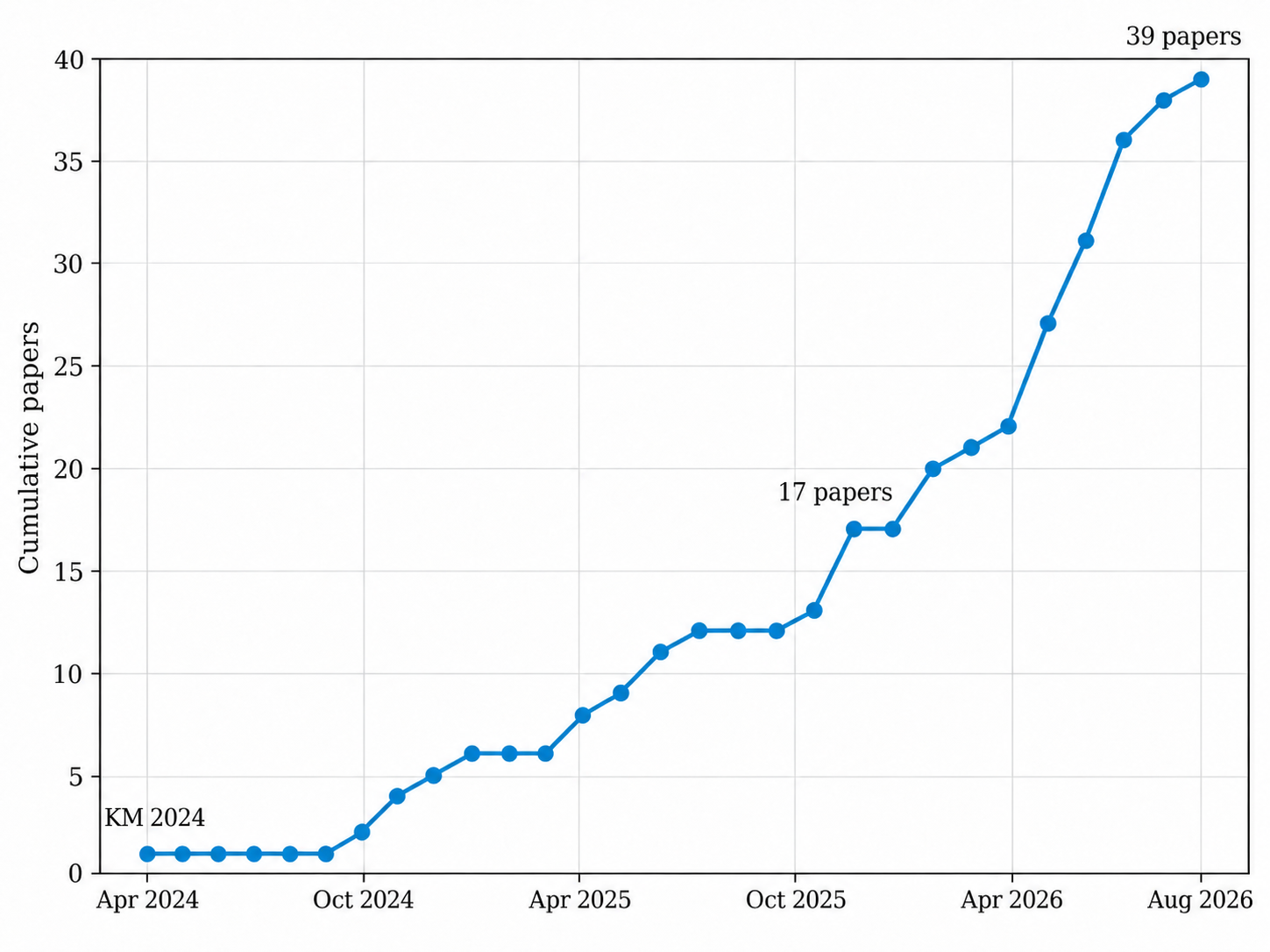}
    \caption{Cumulative number of papers in language generation in the limit (source: \url{languagegeneration.github.io}, accessed September 2026).}
    \label{fig:lgitl_growth}
\end{figure}
We explore this approach in \emph{language generation in the limit},
a theoretical area introduced by \citet[][NeurIPS]{km2024}.
It studies a fundamental mathematical question in language generation: how can a generator produce valid new examples from positive observations of an unknown language?
The literature has since grown rapidly, as shown in Figure~\ref{fig:lgitl_growth}.
It now includes work on language generation, identification, generation breadth, density, noise, feedback, and replay, as well as statistical, privacy, time, and memory \citep{li2024generation,kalavasis2025limits,kleinberg2025density,bai2026noise,replay,mehrotra2026differentially,ganju2026theory,memory}.
The area is large enough to contain substantial mathematical structure, yet small enough to formalize and connect a large fraction of the literature.

We build \emph{GenLimitLib}, a Lean 4 \citep{moura2021lean} library for language generation in the limit. Unlike libraries centered on broad domain foundations, GenLimitLib is source-aligned to a specific, evolving research literature. 
It extracts shared definitions and reusable proof components across papers while preserving paper-specific assumptions and statements.
It also records mathematical relationships across papers and theorems.
Therefore, GenLimitLib provides a concrete and \emph{structured} view of the research area.

This structured view can support human mathematical research.
In GenLimitLib, formalization exposes proof gaps, missing assumptions, and assumptions that could be weakened.
As more papers are formalized, extracting shared definitions and reusable proof
components also makes the global structure of the literature clearer.
This, in turn, can inspire new mathematical questions: whether two definitions
can be related, whether a construction can be transferred to another model, or
whether results from different papers can be combined into a new theorem.
Our case studies illustrate this interaction between formalization, paper reading, and human mathematical research.

The same structured view can also support AI-assisted mathematical research.
Instead of working only with individual papers, language models can use GenLimitLib to retrieve relevant mathematical claims and Lean declarations, reuse checked results, and develop formal proofs.
We study these capabilities by evaluating LLMs on mathematical understanding and on generating new Lean proofs for unpublished theorems with or without GenLimitLib.
More broadly, GenLimitLib may provide infrastructure for studying an \emph{iterative} AI-assisted mathematical research workflow: existing results are formalized and organized into reusable context, that context supports the development of new checked results, and those results can in turn become reusable infrastructure for future work. Our experiments study the key library-to-new-results step of this workflow.

We make the following contributions: 

\begingroup
\setlength{\leftmargini}{1.3em}
\begin{itemize}
    \item \textbf{A source-aligned formal library for an evolving research literature}. 
    We build GenLimitLib.
     It contains formal developments for 30 papers and 405 scoped claims, and over 124,000 lines of Lean code. It factors shared mathematical structure into 18 \texttt{Core} modules of common interfaces and 34 \texttt{Support} modules of reusable proof components, while preserving paper-specific assumptions and statements. 17 \texttt{Bridge} modules record explicit relationships across papers.

    \item \textbf{Mathematical findings from working across the library.}
    We identify and repair a gap in the proof of a published 
    theorem, without changing its algorithm and convergence bound.
    We transfer a published three-language bounded-memory construction to replay,
    reducing another published four-language impossibility example to three,
    and characterize exactly which finite families admit deterministic
    proper generation under replay. We also determine the exact tradeoff
    between mistakes and convergence for deterministic generators on a
    staircase family, resolving a special case of an open problem.

    \item \textbf{Experiments on AI use of a research library.}
    We evaluate GenLimitLib on both Lean proof development and mathematical reading, providing evidence that a structured research library can help LLMs develop new kernel-checked Lean proofs and better understand mathematical papers. Across five \emph{unpublished} theorem tasks and 300 independent proof-generation runs, access to the full library increases kernel-checked proof success from 48\% to 79\%. Separately, on mathematical-reading tasks, relevant Lean evidence and library-derived paper maps improve accuracy by 8–10 percentage points over their respective baselines.

\end{itemize}
\endgroup

\section{Related Work}

\paragraph{Formal libraries.} 
Existing formal libraries are often organized around broad mathematical foundations or particular domains. 
Mathlib provides broad reusable infrastructure for formal mathematics
\citep{mathlib2020}.
A growing set of domain-oriented formal libraries develops reusable
infrastructure for areas such as statistics, machine learning, computer science, including Statlib, Stat-Lean, StatsMLlib, AI4SLT, CSLib,
TCSlib, and EconCSLib
\citep{statlib2026,statlean2026,statmllib2026,zhang2026ai4slt,
cslib2026,tcslib2026,bei2026econcs,garg2026econcs}.
GenLimitLib instead focuses on a much smaller organizing unit: a specific, closely connected, and rapidly evolving research literature.

\paragraph{Formalization in AI-assisted mathematical research.}
Recent AI systems increasingly integrate formal methods into mathematical reasoning and research workflows.
AlphaProof performs formal proof search in Lean \citep{hubert2026alphaproof}, and Aristotle combines informal reasoning with Lean proof search and verification
\citep{achim2025aristotle}.
Google DeepMind's AI Co-Mathematician develops an agentic workbench for open-ended mathematical research and discusses formal provers as components for verified reasoning
\citep{zheng2026ai}.
At the research level, AxiomProver produced a Lean/Mathlib-verified proof of Fel's conjecture, and other systems combine mathematical exploration with formal proof search and verification on open problems
\citep{chen2026fel,ju2026conjecture,tsoukalas2026formal}.
MathCoPilot further combines retrieval, verification, and an accumulating formal knowledge base within a human-AI research workflow
\citep{zhang2026mathcopilot}.
LeanDojo and LeanSearch study retrieval and reuse from large formal libraries
\citep{yang2023leandojo,gao2026leansearch},
and Herald, LeanReasoner, and FANS use formal knowledge as context for mathematical reasoning
\citep{gao2025herald,jiang2024leanreasoner,yao2025fans}.

Our work studies a complementary role for formalization: rather than using it as a component of a particular research workflow, we study it as persistent research infrastructure.
GenLimitLib organizes a connected, evolving research literature in advance into a source-aligned formal library that can be repeatedly queried and reused for mathematical reading and Lean proof development.

\section{Building GenLimitLib}
\label{sec:library-construction}

We present GenLimitLib, a well-structured, source-aligned formal library for a young and rapidly evolving research literature. It extracts shared definitions, reusable proof components, and explicit cross-paper relationships, while preserving paper-specific assumptions and statements. 
This library supports systematic navigation, audit, maintenance, and reuse across the literature.
The library is designed to grow with the literature.

\subsection{Library Construction}
\label{sec:construction}

We began building GenLimitLib by formalizing Kleinberg and Mullainathan's foundational paper on language generation in the limit~\citep[][P01]{km2024}. The paper introduces the basic model and a simple generation algorithm (referred to as the KM algorithm): after observing sufficiently many positive examples from an unknown target language, the algorithm eventually produces unseen examples belonging to that language. Its semantic formalization\footnote{By semantic, we mean a formalization at an abstract level that may use noncomputable operations.} requires only lightweight Lean infrastructure, mostly on basic set-theoretic operations; on top of the shared \texttt{Core} definitions, the semantic development is only about 330 lines of Lean. We then formalized central parts of the classical identification theory of \citet[][P00]{gold1967language} and \citet[][P00A]{angluin1980}, including Gold's positive-text model and Angluin's finite-tell-tale characterization, which underlie many later connections between generation and identification.

\begin{figure}[t]
    \centering
    \includegraphics[width=0.8\linewidth]{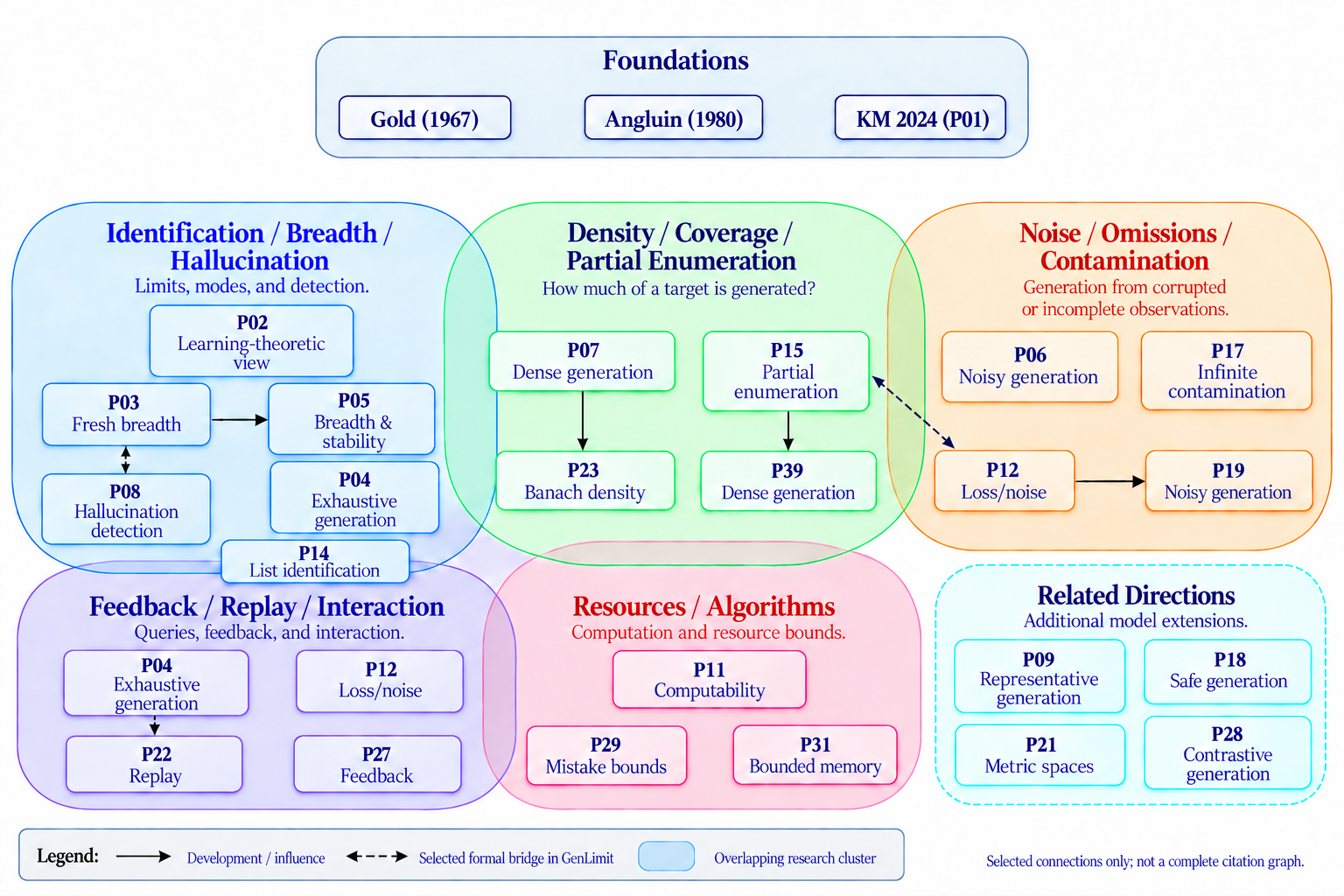}
    \caption{Research paper clusters in language generation in the limit. We follow the order in \url{languagegeneration.github.io} to number the papers.}
    \label{fig:paper_cluster}
\end{figure}

We used mathematical dependencies and emerging research clusters to guide the order and scope of formalization; see Figure~\ref{fig:paper_cluster}. Human researchers selected which papers and results to formalize, as well as the appropriate level of formalization.
Many results admit lightweight semantic formulations, but their finite-query, computability, probabilistic, or machine-level guarantees often require substantially richer Lean infrastructure that may not yet be available, as illustrated by CSLib \citep{cslib2026}.
For example, in GenLimitLib, P01 \citep{km2024} includes both semantic and finite-query formalizations; P03 \citep{kalavasis2025limits} focuses on the probability-free semantic cores; P30 \citep{ganju2026theory} currently formalizes deterministic components without the randomized almost-sure layer. 
In many cases, the semantic layer already captures a substantial part of the underlying mathematics.
Within each selected scope, we used AI-assisted Lean development, after which human researchers audited the resulting formalization.\footnote{Lean development used \texttt{codex/gpt-5.6-sol} at \texttt{ultra} and \texttt{xhigh} reasoning levels.}

As the library grew, formalization also revealed common structures and paper relationships that were not always obvious because of different notions and presentations across papers. 
We extracted shared definitions into \texttt{Core} and reusable proof components into \texttt{Support}. 
We kept each paper's assumptions and statements within its \texttt{Paper} modules, and recorded relationships across papers in cross-paper \texttt{Bridge} modules.

GenLimitLib currently represents 30 paper developments, totaling 124,275 lines of Lean code and 405 explicitly scoped source claims. Of these claims, 217 (53.6\%) have full coverage and 82 (20.2\%) have componentwise partial coverage, with some components formally proved in Lean.\footnote{Among the 82 claims with partial Lean coverage, one uncovered component might be a finite-query or executable realization, a randomized layer, runtime or sample-complexity bounds, typically outside the present formalization scope. Among the 106 claims with no Lean coverage, 63 are outside the current scope, 14 are inconsistent or contradicted as currently stated, and 5 have a source-proof gap or underspecification.}
The shared layer consists of 18 \texttt{Core}, 34 \texttt{Support}, and 17 \texttt{Bridge} modules. Of 252 \texttt{Core} declarations, 134 (53.2\%) are directly reused by at least two papers and 60 (23.8\%) by at least five; among 214 \texttt{Support} theorems, 79 (36.9\%) and 6 (2.8\%) are reused at these respective thresholds. Separately, the library records 34 explicit cross-paper relations spanning 19 directed paper pairs.

\subsection{Shared Mathematical Infrastructure}
\label{sec:shared_infrastructure_overview}

Our library has two shared layers.
\texttt{Core} contains paper-independent mathematical interfaces.
\texttt{Support} contains proof components reused across papers.
We use two representative examples to illustrate how the \texttt{Core} layer organizes shared mathematical structure across the literature. These examples also introduce the basic definitions of language and generation used in Section \ref{sec:our-results}.

\paragraph{Languages, families, and access models.}
A basic design principle of GenLimitLib is to separate the core mathematical objects
from model-specific assumptions and access mechanisms.
Therefore we keep definitions in \texttt{Core} general.
For example, the library separates three choices: the example space, the collection of languages, and the algorithmic access model.
In \texttt{Core}, we define
\begin{lstlisting}[style=lean]
abbrev Language (α : Type*) := Set α
abbrev LanguageClass (α : Type*) := Set (Language α)
abbrev LanguageFamily (α : Type*) := ℕ → Language α
\end{lstlisting}
A \texttt{Language} is simply a set and may be finite or infinite.
A \texttt{LanguageClass} is an arbitrary set of languages and may be
uncountable.
A \texttt{LanguageFamily} is instead indexed by $\mathbb{N}$, so its range is at most countable.
It also allows repeated languages.
The second representation is used by \citet[][P01]{km2024}, which considers a
countable indexed list of languages.

Additional assumptions are added only when required by a particular model.
For example, the KM finite-query formalization uses
\begin{lstlisting}[style=lean]
structure OracleFamily where
  language   : LanguageFamily
  infinite'  : ∀ i, (language i).Infinite
  query      : ℕ → ℕ → Bool
  query_spec : ∀ i u, query i u = true ↔ u ∈ language i
\end{lstlisting}
Here, \texttt{language} is the underlying semantic object.
The field \texttt{infinite'} adds the assumption that every language is
infinite.
The field \texttt{query} adds a uniform membership oracle.
These assumptions are conceptually separate from the language family itself.

The KM development shows why this separation is useful. 
The semantic construction needs only the indexed \texttt{LanguageFamily} and the infinitude of its languages. 
It reasons directly about relations between whole languages and is thus noncomputable. 
The finite-query construction uses the full \texttt{OracleFamily}, which includes the same underlying family and infinitude assumption, together with a membership oracle. 
It replaces whole-language tests by finite Boolean computations built from \texttt{query}. 
Thus, the semantic and finite-query developments share the same underlying language-family definitions, and stronger access assumptions such as membership queries are introduced only for the finite-query layer.

\paragraph{Generation guarantees.}
GenLimitLib also separates a common generation interface from different notions of eventual success.
A generator observes a finite history and outputs one new example.
For an example stream $w$, \texttt{sample w t} is the set of distinct observations
before time $t$.
\texttt{Presents w L} means that $w$ enumerates exactly the (target) language $L$. 
\texttt{StreamIn w L} only requires every observation to belong to $L$.
Finally, \texttt{CorrectAt G L w t} means that the output of $G$ at time $t$
belongs to $L$ and has not appeared in the observed sample from $w$.

With these common notions, the three main generation guarantees differ primarily in their quantifier structure: generation in the limit \citep{km2024}, and uniform and nonuniform generation \citep{li2024generation}. In \texttt{Core}, we define:
\begin{lstlisting}[style=lean]
def IsLimitGenerator
    (gen : Generator α) (H : LanguageClass α) : Prop :=
  ∀ L, L ∈ H → ∀ stream, Presents stream L →
    ∃ T, ∀ s, T ≤ s → CorrectAt gen L stream s

def IsNonuniformGenerator
    (gen : Generator α) (H : LanguageClass α) : Prop :=
  ∀ L, L ∈ H → ∃ d,
    ∀ stream, StreamIn stream L →
      ∀ t, (sample stream t).card = d →
        ∀ s, t ≤ s → CorrectAt gen L stream s

def IsUniformGeneratorAt
    (gen : Generator α) (H : LanguageClass α) (d : ℕ) : Prop :=
  ∀ L, L ∈ H → ∀ stream, StreamIn stream L →
    ∀ t, (sample stream t).card = d →
      ∀ s, t ≤ s → CorrectAt gen L stream s

\end{lstlisting}

The quantifier order captures the main differences.
For generation in the limit, the time threshold $T$ comes after both the target $L$ and its presentation, so it may depend on both.
Nonuniform generation chooses a sample-size threshold $d$ after the target $L$ but before the stream, so it may depend on the target but not on its presentation.
Uniform generation chooses the same threshold $d$ before the target is known, so the same $d$ must work for the whole class.
In all three notions, a single generator must work for the whole class.
For classes of infinite languages, \texttt{Core.ClassGeneration} further proves the hierarchy: uniform generation $\implies$ nonuniform generation $\implies$ generation in the limit.

\texttt{Core} contains 18 top-level Lean modules. Beyond the two examples above, these modules provide shared interfaces for language identification, learning-theoretic structures, density and partial presentations, noise and robustness, and other topics.
\texttt{Support} contains an additional 34 Lean modules. These modules collect reusable proof and construction components shared across different papers, including fresh-element selection, finite-tell-tale, adaptive membership-query trees, history-chain constructions, and stabilization lemmas.

\subsection{Library Navigation and Maintenance}
\label{sec:source_access}

To support both use and continued growth, GenLimitLib provides navigation at three levels: papers, mathematical claims, and Lean declarations.
These levels serve complementary uses: PaperMaps are human-readable guides, claim cards support claim-level retrieval for humans and language models, and Lean declaration cards support Lean-level retrieval for proof development.

\paragraph{Paper-level navigation.}
For each paper, a \texttt{PaperMap} provides a compact human-readable guide in Markdown.
It records the paper source version, formalization boundary, main theorem entry points, and known gaps or repairs.
It also points to shared infrastructure and relevant cross-paper relationships.

\paragraph{Claim-level navigation.}
For finer-grained retrieval, we provide claim cards.
Each paper has a machine-readable JSON file in \texttt{registry/papers/} containing its claim records.
These claims are also exported as independent entries in \texttt{cards.jsonl}.
Each card gives a short description of the claim, links it to relevant Lean declarations, and records the scope and progress of the formalization.
The cards provide finer retrieval units than the original papers and support claim-level search by both humans and language models.

\paragraph{Lean declaration-level navigation.}
For navigation from the Lean side, we provide Lean declaration cards.
Each public Lean declaration has a machine-readable entry in
\texttt{registry/generated/declarations.jsonl}, together with a lookup index in \texttt{declaration-index.json}.
Each card records the declaration name, type, defining module, documentation, and direct dependencies within GenLimitLib.
When available, it also links back to the corresponding claim cards.
These cards support retrieval of definitions and theorems for Lean proof development.

\paragraph{Audit and maintenance.}
We use three cumulative levels of human audit. Level 1 checks theorem statements: whether the assumptions, inputs, outputs, and conclusions in Lean match the paper. Level 2 additionally checks algorithms: whether the formal construction or state machine matches the paper's algorithm. Level 3 further checks intermediate lemmas and proof dependencies against their source counterparts.
Each audit tracks the Lean declarations relevant to its scope.
We store fingerprints of their theorem statements and definitions.
When one of these declarations changes, only the affected audit scopes are marked for review, and unrelated audits remain valid.
This allows a narrow re-audit of the changed part rather than re-auditing the whole paper.

\section{Theoretical Findings}
\label{sec:our-results}
\label{sec:research-method}
We present three mathematical case studies developed while building and working with GenLimitLib: a proof gap with a local repair, a construction transferred between papers, and a solution to an open problem for the staircase family. All mathematical results below have been formalized and verified in Lean. Appendix~\ref{app:proof-status} gives the research workflow, written proofs, and formal verification. 

\input{supporting/results_connections.tex}
\input{supporting/results_math.tex}

\section{Experiments}
\label{sec:experiments}
\label{sec:method}

We empirically evaluate GenLimitLib on Lean proof formalization and mathematical reading.

\input{supporting/results_autoformalization.tex}

\input{supporting/e01s_brief.tex}
\section{Discussion and Conclusion}

We build GenLimitLib, a formal library for a young and rapidly evolving research area. It extracts shared definitions, reusable proof components, and records explicit cross-paper relationships. We provide evidence that this structure can support both human and AI-assisted mathematical research. Below, we discuss several future directions.

\paragraph{Using formal research libraries.}
Our PML-Oracle results suggest that retrieval can be a separate bottleneck. Better theorem search and dependency-aware retrieval are natural next steps. The library may also provide useful data for domain-specific post-training.

\paragraph{Automating library construction more.}
Building a coherent multi-paper library requires human experts to make choices about scope, shared abstractions, reusable proofs, and cross-paper relationships. Future systems could automate some of this process.

\paragraph{Beyond language generation in the limit.}
Our approach is not specific to language generation in the limit. Other young research areas with closely related papers, shared definitions, and reusable proof ideas may also benefit from literature-centered formal libraries.

\subsection*{Reproducibility statement}
We provide reproducibility materials for both the theoretical and experimental results at \url{https://github.com/pengzhang91/genlimitlib-experiments}.
Appendix~\ref{app:proof-status} gives complete written proofs for the main mathematical findings and records the corresponding scope of Lean verification.
Appendix~\ref{app:autoformalization} specifies the proof-generation setup and additional experimental details.
Appendix~\ref{app:experiment-details} provides the mathematical-reading protocol, question construction, reader configuration, and analysis procedure.

\subsection*{AI Use Statement}

Generative AI tools (GPT-5.6 and 6, Claude 5) were used to assist with literature review; mathematical exploration, question and theorem formulation, and statement refinement; Lean formalization and proof development; experiment implementation and analysis; and manuscript drafting and editing. The mathematical results reported in the paper were reviewed by the authors and independently verified in Lean where stated. Experimental code and reported results were checked by the authors. The authors take responsibility for the final content of the paper, including all AI-assisted code, mathematical claims, and text.

\subsection*{Acknowledgements}

The authors would like to thank Ziyi Cai, Yiheng Shen, and Kangning Wang for helpful discussions at an early stage of this work. We also thank Moses Charikar and Chirag Pabbaraju for insightful early discussions and for raising several helpful questions.

\begingroup
\raggedright
\bibliographystyle{supporting/conference}
\bibliography{supporting/references}
\endgroup
\appendix
\raggedbottom
\input{supporting/appendix_roadmap.tex}
\input{supporting/appendix_proof_guide.tex}
\input{supporting/appendix_replay.tex}
\input{supporting/appendix_p29_frontier.tex}

\input{supporting/appendix_autoformalization.tex}

\input{supporting/appendix_experiments.tex}

\end{document}

%% file: supporting/results_connections.tex
\subsection{Identifying and repairing a proof gap}
\label{sec:precision}
In our first example we identify a gap in the proof of a published generation theorem and give a short repair. The repair preserves the original algorithm and theorem statement.

In \emph{Pareto-optimal Non-uniform Language Generation},
\citet[][P13]{charikar2026pareto} construct a generator whose convergence
guarantee depends on the target language but holds for every presentation
order. Their proof of Theorem~8 uses a procedure that inserts languages
into an ordered list. When a language $L$ is inserted, the procedure
considers subcollections containing $L$, drawn from the list up to $L$,
whose intersection is finite. It stores the largest such intersection
size as a score $m^\star(L)$, together with a subcollection
$\mathcal C(L)$ attaining it. If no such subcollection exists, it stores
zero and an empty subcollection. Claim~7 asserts that the stored
subcollection remains a maximizer as further languages are inserted.

While formalizing Claim~7, GenLimitLib records a counterexample. Take two disjoint infinite
languages $A$ and $B$. When $A$ is inserted first, no subcollection
containing $A$ has finite intersection, so the procedure stores
$\mathcal C(A)=\varnothing$ and $m^\star(A)=0$. When $B$ is inserted,
$\{A,B\}$ has intersection size zero, and the insertion rule places
$B$ before $A$. However, the stored subcollection $\mathcal C(A)$
remains empty. It does not contain $A$, so it cannot be the
maximizing subcollection asserted by Claim~7.

Our repair proves that every subcollection containing $L$, drawn from the current list up to $L$ and having finite intersection, has
intersection size at most the stored score $m^\star(L)$.
It proves that this bound is preserved after each insertion.
This is the bound needed to complete the generation proof.

\subsection{Connecting papers to derive new results}
\label{sec:connections}

GenLimitLib also supports transferring constructions between different learning
problems. We connect a bounded-memory example from P31 to the replay
model of P22, obtaining a stronger impossibility example. This connection
then motivates an exact characterization result.

\emph{Language Generation with Replay} \citep[][P22]{replay}
studies a model in which the adversary can use the generator's earlier outputs to contaminate future observations. The paper studies several generation guarantees in this setting. We focus on \emph{proper generation}: proposals must come from a fixed family and eventually be contained in the true target.

A different paper, \emph{On Language Generation in the Limit with
Bounded Memory} \citep[][P31]{memory}, asks how restricting a
generator's memory affects generation. It gives a three-language example on which generators remembering only their previous proposed
index cannot succeed. The languages are
$T\cup\{a,b\}$, $T\cup\{a,c\}$, and $T\cup\{b,c\}$,
where $T$ is countably infinite and $a,b,c$ are distinct examples outside $T$.

Using GenLimitLib, we show that this same example also makes deterministic proper
generation under replay impossible, even with unrestricted memory.
Our proof combines P31's construction with the indistinguishability
argument in P22's impossibility proof. This improves
P22's four-language impossibility example to three languages.

This connection motivates a broader question: which finite
families permit proper generation under replay?
Our Theorem~\ref{thm:proper-replay} below gives an exact characterization.
\begin{result}[When proper generation under replay is possible]
\label{thm:proper-replay}
Let $L_1,\ldots,L_N$, with $N\ge1$, be a finite family of infinite countable
languages. Deterministic proper generation under replay is possible
if and only if, for every possible first observation $x$, there is
a first proposal $L_j$ with the following property.
Group the indices $i$ with $x\in L_i$ according to equality of
$L_i\setminus L_j$. For every such group $E$, some language $L_h$
from the original family satisfies
$L_h\subseteq\bigcap_{i\in E}L_i$.
\end{result}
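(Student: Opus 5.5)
The proof splits into the two directions. Both rest on one observation about replay. While the generator's proposals lie in some set $S$, the adversary may inject elements of $S$ as observations whatever the target is. So the only information that certainly identifies the target is the set of observed elements outside the languages proposed so far. Fix $x$ and a first proposal $L_j$, and partition the indices $i$ with $x\in L_i$ by the value of $L_i\setminus L_j$. The intended reading is that two targets in the same class cannot be told apart by any adversary strategy that exploits replay of $L_j$-content. Targets in different classes are eventually separated by a genuine observation in the symmetric difference of their parts outside $L_j$.

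\emph{Sufficiency.} For each $x$ I fix the promised $j=j(x)$, and for each class $E$ an index $h(E)$ with $L_{h(E)}\subseteq\bigcap_{i\in E}L_i$. The generator proposes $L_j$ after the first observation $x$. It then maintains the set $O$ of observed elements outside $L_j$ and the current candidate class, namely the classes $E$ whose common set $D_E=L_i\setminus L_j$ contains $O$. It proposes $L_{h(E)}$ for the $\subseteq$-minimal consistent candidate, breaking ties by a fixed order.

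The proof of correctness has three steps.
\begin{enumerate}
\item Any element outside $L_j$ that appears was either presented from the target $L_t$, or replayed from an earlier proposal $L_{h(E')}$.
\item Finiteness of the family makes the class of the true target consistent forever, and every wrong class with $D_E\not\supseteq D_{E_t}$ is eventually refuted, because all of $L_t$ must be presented.
\item Once $L_{h(E_t)}$ is proposed, every subsequent replay stays inside $L_t$, so the stable proposal is contained in $L_t$.
\end{enumerate}

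The delicate point is in the second step. A wrong class with $D_E\supsetneq D_{E_t}$ could be made "consistent" by replays drawn from earlier wrong proposals $L_{h(E')}$. I would handle this with a potential argument on the finite lattice of candidate sets. Each switch of proposal forces either a strictly larger observed $O$ or a refutation, so switches occur only finitely often, and the final class is $E_t$ or a class whose $L_h$ is still inside $L_t$. If this argument needs adjusting, the fix is to order candidates by $|\{i\}|$ and use $O\cap(\text{genuinely unreplayable elements})$.

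\emph{Necessity.} Suppose there is an $x$ such that for every $j$ some class $E_j$ has no $L_h\subseteq\bigcap_{i\in E_j}L_i$. The adversary opens with $x$, and the deterministic generator answers with some $L_j$; the adversary commits to the class $E=E_j$. Following the indistinguishability argument of P22, already used in the three-language example of Section~\ref{sec:connections}, it then builds a single observation sequence that is simultaneously a valid replay-presentation of every $L_i$ with $i\in E$. This sequence enumerates the common part $D_E\cup\{x\}$ and supplies the remaining elements of each $L_i\cap L_j$ as replays of the generator's own outputs. Because the generator is deterministic, its proposal sequence on this common stream is fixed. Proper success for each $i\in E$ would force all sufficiently late proposals into every $L_i$, hence into $\bigcap_{i\in E}L_i$. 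Since the family is finite, some index is proposed infinitely often, and that index would be an $L_h\subseteq\bigcap_{i\in E}L_i$, which contradicts the choice of $E$.

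The main obstacle is the construction of the common stream in the necessity direction. One must check that elements of $L_i\cap L_j$ are actually available to replay for every $i\in E$, even when the generator later abandons $L_j$. I would first try to show that it suffices to replay outputs produced while $L_j$ is proposed. If the generator leaves $L_j$, the adversary can restart the argument with that new proposal in place of $j$. This turns the argument into an induction on the finite set of proposals that the generator visits infinitely often.
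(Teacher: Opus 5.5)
\textbf{Necessity.} Your necessity direction is the paper's argument, and it works. The obstacle you flag there does not arise. Legality is cumulative, $x_n\in L_i\cup\bigcup_{t<n}L_{j_t}$, so $L_j$ stays replayable forever. Hence an enumeration of $L_j\cup D_E$ after $x$ is a legal, complete text for every target in $E$, whatever the generator does later, and no induction over later proposals is needed.

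\textbf{The gap in sufficiency.} The genuine gap is the rule that proposes $L_{h(E)}$ for a $\subseteq$-minimal consistent class with tie-breaking. Minimal residuals can be incomparable. If the selected class $E'$ is not the true class $E_t$, then $L_{h(E')}$ may contain points of $D_{E'}\setminus D_{E_t}$. By the same cumulativity, the adversary may replay those points at any later time. They lie outside $L_j$, so they enter $O$ and refute $E_t$ permanently. Consequently:
\begin{itemize}
\item your step~2 (the true class stays consistent) is false;
\item your step~3 overlooks replays from earlier wrong proposals;
\item your potential argument gives stabilization, but not that the final proposal lies in $L_t$.
\end{itemize}

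\textbf{A concrete counterexample.} Let $T$ be infinite, let $y,a,b,z_1,z_2\notin T$ be distinct, and take
$L_1=T\cup\{z_1,z_2\}$, $L_2=T\cup\{y,a,z_1\}$, $L_3=T\cup\{y,b,z_2\}$, $L_4=T\cup\{y,a,b\}$.
The choice $j(x)=1$ for every $x$ is admissible, because within every profile the residuals relative to $L_1$ are distinct. Your selectors are then forced: $h(\{i\})=i$ for $i=2,3,4$. Present a point of $T$, then $y$. The minimal consistent classes are $\{2\}$ and $\{3\}$.
\begin{itemize}
\item Suppose the tie-break picks $\{2\}$, and take target $L_3$. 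The generator proposes $L_2$.
\item The adversary replays $a$, which refutes $\{3\}$.
\item The genuine point $b$ then leaves only $\{4\}$.
\item The generator proposes $L_4\not\subseteq L_3$ forever, while the rest of $L_3$, which lies in $L_1$, is enumerated.
\end{itemize}
The symmetric tie-break fails on target $L_2$.

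\textbf{The missing idea.} You need an invariant that keeps wrong proposals from ever becoming dangerous: every proposal satisfies $L_{j_n}\setminus L_{i_*}\subseteq L_j$. The paper enforces this as follows.
\begin{itemize}
\item It proposes $h(E)$ only when the surviving residuals have a \emph{least} element $R$, contained in every surviving residual and hence in the true one. Then $L_{h(E)}\setminus L_j\subseteq R\subseteq R_j(i_*)$.
\item If there is no least residual, it falls back to $j$.
\end{itemize}
The invariant is proved by simultaneous induction with survival of the true index. By legality and the invariant, every observation outside $L_j$ is a genuine target point, so updates never remove $i_*$. Separately, updates test membership only outside $L_j$, so candidate sets remain unions of original groups, and $h(E)$ is defined.

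Convergence then follows. For each initial candidate $k$ with $R_j(i_*)\not\subseteq R_j(k)$, fix a witness in $R_j(i_*)\setminus R_j(k)$. Once all witnesses have appeared, $R_j(i_*)$ is the least surviving residual, and the selected group is exactly $E_j(i_*;P)$.

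In the example above, this learner proposes $L_1$ after $y$, since no residual is least, so nothing harmful becomes replayable. It switches to $L_3$ once $b$ appears.
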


%% file: supporting/results_math.tex
\subsection{Resolving an open problem in a special case}
\label{sec:new-mathematics}
\emph{Mistake-Bounded Language Generation} \citep[][P29]{mistake}
studies generators that learn from positive examples and output an unobserved
example at each step. A mistake occurs when the output is outside the target
language. The paper asks how the total number of mistakes trades off against
how quickly the generator stops making them (Section~8, Open Direction~(2)). Guided by GenLimitLib's definitions of mistake and convergence
guarantees, we resolve this question for deterministic generation on a
\emph{staircase family} that generalizes the paper's tradeoff construction.

This \emph{staircase family} consists of languages $L_i=P_i\cup R_i$,
where $P_i=\bigcup_{k\le i}C_k$. The blocks $C_i$ are nonempty and finite,
the sets $R_i$ are countably infinite, and all these sets are mutually disjoint.
Thus each successive language contains one more finite block, together with
an infinite part $R_i$ that belongs only to that language. Write $s_i=|P_i|$.
We measure convergence by the number of distinct observations needed before
all later outputs are guaranteed to be valid, for every presentation order.

For this family, the tradeoff comes from a simple choice. When the observed
set is exactly $P_k$, the generator can output in $R_k$, which is valid only
for $L_k$, or in $C_{k+1}$, which is valid for every later target.
Away from these decision points, it can choose an unobserved example common to all
compatible targets. Let $\sigma_k=1$ record the choice of $R_k$ and
$\sigma_k=0$ the choice of $C_{k+1}$. Write $M_i(G)$ for the worst-case
number of mistakes of a generator $G$ on $L_i$, and $D_i(G)$ for its
convergence deadline.

\begin{result}[Optimal mistake and convergence guarantees]
\label{thm:staircase-frontier}
For a binary sequence $\sigma=(\sigma_k)_{k\ge1}$, let
$J_i(\sigma)=\{k<i:\sigma_k=1\}\cup\{i:\sigma_i=0\}$ and define
\[
 m_i(\sigma)=|J_i(\sigma)|,\qquad
 d_i(\sigma)=\max\bigl(\{0\}\cup\{s_k+1:k\in J_i(\sigma)\}\bigr).
\]
Each sequence $\sigma$ is realized by a deterministic generator $G_\sigma$
with $M_i(G_\sigma)=m_i(\sigma)$ and $D_i(G_\sigma)=d_i(\sigma)$ for all $i$.
Conversely, every deterministic generator $G$ admits a sequence $\sigma$
such that $m_i(\sigma)\le M_i(G)$ and $d_i(\sigma)\le D_i(G)$ for every $i$.
These are exactly the Pareto-optimal guarantees: none can be improved
without worsening another.
\end{result}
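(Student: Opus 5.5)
We prove the two directions separately and then derive Pareto-optimality from them. Throughout, the observed set $S$ is the set of distinct observations so far, and we exploit two facts. First, any finite $S\subseteq L_i$ is consistent with $L_i$ and with every later language $L_j$ ($j>i$) once $S\subseteq P_i\cup R_i$ meets no $R_i$ point; second, once $S$ contains a point of some $R_i$, the target is pinned to $L_i$. The compatible set of targets given $S$ is therefore either a single index, or the upward-closed set $\{j\ge k\}$ where $k$ is the least index with $S\subseteq P_k$. In the latter case we may also have $S\subsetneq P_k$.

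\textbf{Achievability.} We define $G_\sigma$ casewise.
\begin{itemize}
\item If $S$ meets $R_i$, output a fresh point of $R_i$. This is always valid.
\item If $S\subseteq P_k$ with $k$ minimal and $S\neq P_k$, output an unobserved point of $P_k$. Such a point is common to all compatible targets, hence valid.
\item If $S=P_k$ exactly (a decision point), output a fresh element of $R_k$ when $\sigma_k=1$, and an element of $C_{k+1}$ when $\sigma_k=0$.
\end{itemize}
Fix a target $L_i$. A mistake can only occur at a decision point $S=P_k$ with $k\le i$. For $k<i$, the $R_k$ choice is wrong and the $C_{k+1}$ choice is right. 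For $k=i$, the $C_{i+1}$ choice is wrong and $R_i$ is right. So the mistakes occur exactly at decision points indexed by $J_i(\sigma)$.

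Each such point is visited at most once per stream, and possibly repeatedly over several time steps while $S$ stays $P_k$. We therefore count mistakes per distinct observation set. To make the count exact, we require the generator to output a deterministic fixed point, so that the adversary can repeat it. I would check that the paper's mistake count charges each repeated output separately, or counts the mistake once the output becomes observed. If outputs must be unobserved, then after a mistake $x\in C_{k+1}$ the adversary cannot present $x$ under target $L_k$, so the observed set stays $P_k$ and the generator repeats the error. Hence $M_i$ should be read as distinct erroneous decision points, and I would align with the formal definition.

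The adversary attains all of $J_i(\sigma)$ by presenting $P_1,P_2,\dots$ block by block up to $P_i$. Every mistake occurs when $|S|=s_k$, so correctness holds from $s_k+1$ observations on. This gives $D_i=d_i(\sigma)$, and the matching lower bound on $D_i$ comes from the same stream.

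\textbf{Converse.} Given an arbitrary deterministic $G$, set $\sigma_k=1$ iff $G$, on the canonical history enumerating $P_k$ in blockwise order, outputs something outside $P_{k+1}$'s common part. Concretely, $\sigma_k=1$ iff $G$'s output on that history is not in $L_j$ for all $j>k$. If $\sigma_k=1$, then target $L_j$ for any $j>k$ suffers a mistake at that history. If $\sigma_k=0$, the output lies in $\bigcap_{j>k}L_j=P_{k+1}$ minus $P_k$, i.e. in $C_{k+1}$, which is outside $L_k$, so target $L_k$ suffers a mistake.

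Thus for target $L_i$, each $k\in J_i(\sigma)$ yields a distinct mistake on one single stream, namely the blockwise enumeration of $P_i$ followed by $R_i$. These mistakes occur at distinct prefix sizes $s_k$. This gives $M_i(G)\ge m_i$ and $D_i(G)\ge s_k+1$ for each $k\in J_i$. The main obstacle is this step. We must ensure a \emph{single} stream witnesses all $k\in J_i$ simultaneously, which the nested canonical history achieves. We must also handle histories where the observed set equals $P_k$ but with a different order or with repetitions; this is why we fix one canonical ordering and use determinism.

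\textbf{Pareto-optimality.} Achievability and the converse together show that the set of achievable guarantee vectors $(M_i,D_i)_i$ is exactly the upward closure of $\{(m_i(\sigma),d_i(\sigma))_i\}$. It remains to show that no $(m(\sigma),d(\sigma))$ strictly dominates another $(m(\tau),d(\tau))$. Take the least $k$ with $\sigma_k\neq\tau_k$, say $\sigma_k=1$ and $\tau_k=0$. Then $k\in J_k(\tau)\setminus J_k(\sigma)$, and the index sets agree below $k$, so $m_k(\sigma)<m_k(\tau)$. For $i=k+1$, $k\in J_{k+1}(\sigma)\setminus J_{k+1}(\tau)$ while the indices below $k$ agree, and the $\{i:\sigma_i=0\}$ terms may differ. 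I would compare $m_{k+1}$ together with $d_{k+1}$ to exhibit a coordinate where $\sigma$ is strictly worse, using $s_k+1\le$ the relevant deadline, with a short case check on $\sigma_{k+1},\tau_{k+1}$.
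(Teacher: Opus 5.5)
Your achievability argument and the single-stream lower bound match the paper. That lower bound uses the canonical blockwise history $h_i$ of $P_i$, followed by $R_i$.

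The Pareto step, however, is left open, and the route you sketch does not close. Write $A=\{k'<k:\sigma_{k'}=1\}$, with $\sigma_k=1$ and $\tau_k=0$. The subcases $\sigma_{k+1}=0$ and $\sigma_{k+1}=\tau_{k+1}=1$ are settled by $m_{k+1}$. But if $\sigma_{k+1}=1$ and $\tau_{k+1}=0$, then $m_{k+1}(\sigma)=m_{k+1}(\tau)=|A|+1$ and $d_{k+1}(\sigma)=s_k+1<s_{k+1}+1=d_{k+1}(\tau)$. So at $i=k+1$ the schedule $\sigma$ is no worse in either coordinate, and no deadline comparison there can help.

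You must pass to $i=k+2$. There $J_{k+2}(\sigma)\supseteq A\cup\{k,k+1\}$ while $J_{k+2}(\tau)\subseteq A\cup\{k+2\}$, so $m_{k+2}(\sigma)>m_{k+2}(\tau)$. This is the paper's argument, and it shows more: distinct schedules already have incomparable \emph{mistake} vectors, so deadlines play no role. If $G$ dominates $G_\sigma$, the converse gives $\tau$ with $m_i(\tau)\le M_i(G)\le m_i(\sigma)$ for all $i$. This forces $\tau=\sigma$, and the deadline inequalities then collapse to equalities.

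A smaller point: your claim that the achievable vectors are exactly the upward closure of the schedule profiles is false. $D_i=0$ forces $M_i=0$, while the closure contains vectors with $D_1=0<M_1$. You only need the two inclusions.

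Your definition of $\sigma_k$ in the converse is also inconsistent.
\begin{itemize}
\item Read as $G(h_k)\notin\bigcap_{j>k}L_j=P_{k+1}$, it fails. An output in $C_{k+2}$ gets $\sigma_k=1$, yet it is valid for every target $L_i$ with $i\ge k+2$, so the mistake charged at $h_k$ is not witnessed.
\item Read as ``$G(h_k)\notin L_j$ for every $j>k$'', the case $\sigma_k=1$ works, but your reason for $\sigma_k=0$ is wrong. The output need not lie in $C_{k+1}$; it may lie in $C_{k+2}$ or $R_{k+1}$. What is true, and suffices, is that it lies outside $L_k$: freshness excludes $P_k$, and $R_k$ meets no other language.
\end{itemize}
The paper's choice makes both cases immediate: $\sigma_k=1$ iff $G(h_k)\in R_k$, equivalently (by freshness) iff $G(h_k)\in L_k$.

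Finally, drop the hedging about repetitions. $M_i$ is a supremum over injective streams, and legal histories consist of distinct points. So $P_k$ is the observed set at exactly one round. Under target $L_k$ the next observation is a new point of $R_k$, so the observed set does not ``stay $P_k$''. Injectivity is essential, not a convention. If a point of $P_k$ could be repeated indefinitely, each fresh output would be wrong for $L_k$, or (if it lies in $R_k$) for $L_{k+1}$. Then every generator would have $M_k$ or $M_{k+1}$ infinite.
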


The theorem gives matching constructions and lower bounds for every optimal
tradeoff on this family. It determines both the number and the latest possible
timing of mistakes, for arbitrary positive finite block sizes
(Appendix~\ref{app:p29-frontier}).

%% file: supporting/results_autoformalization.tex
\subsection{Library-Assisted Lean Proof Generation}
\label{sec:autoformalization}

We use a five-paper research-neighborhood sub-library centered on paper 
\citet[][P39]{cai2026dense} studying dense generation.\footnote{\citet[][P39]{cai2026dense} is close to the foundational algorithm of \citet[][P01]{km2024} and simplifies the dense-generation algorithms of \citet[][P07]{kleinberg2025density} and \citet[][P15]{kleinberg2025partial}.}
The four other papers study variants of the basic model involving noise, omission, feedback, and contamination \citep{ananth2025generation,bai2026noise,mehrotra2026language,li2026characterizing}.
These five papers form a small but closely connected body of work, providing a realistic setting for studying research within a focused cluster of related papers.

We use five theorem tasks obtained from 54 candidate proposals generated by an upstream LLM ideation pipeline over this research neighborhood and the sub-library; see Appendix \ref{app:autoformalization_more_details}.
The pipeline targeted new results within this neighborhood and screened out proposals that were false or immediate consequences of the five papers.
The retained statements were then independently screened and formally validated. They were \emph{genuinely new and unpublished} at the time of evaluation.

\begin{table}[htbp]
\centering
\small
\caption{Resource conditions.}
\label{tab:resource-conditions}
\smallskip
\renewcommand{\arraystretch}{1.15}
\begin{tabular}{@{}ll@{}}
\toprule
Condition & Resources \\
\midrule
P
& Minimal Lean vocabulary \\
PML
& Full research sub-library \\
PML-Oracle
& Oracle-selected library subset \\
\bottomrule
\end{tabular}
\end{table}
We compare three resource conditions. All conditions received the same five papers, faithful Lean theorem statement, Mathlib environment, and author instructions; they differed only in the project-specific resources provided to the LLM author, in Table~\ref{tab:resource-conditions}.
P contains only a minimal definition-level project vocabulary. PML additionally provides the full frozen sub-library with 161 Lean files and navigation materials.
PML-Oracle replaces the 161 Lean files with a task-specific subset of Lean files selected using an oracle reference proof; the proof itself and API use are not exposed to PML-Oracle. 
We treat PML-Oracle as an idealized file-retrieval condition.
We test these conditions with or without a given natural-language proof and at high or medium \texttt{gpt-5.6-sol} reasoning effort.

For each of the five theorem tasks, we run five independent trajectories under each combination of the three resource conditions, with or without a supplied proof, and at high or medium reasoning effort, yielding 300 runs in total.
All runs used Codex CLI 0.155.1, a 90-minute LLM author budget, Lean 4.24.0, and a locked Mathlib/Lake environment.

Success is binary. A run is successful if the LLM-produced Lean proof independently recompiles and passes kernel checking against the frozen theorem statement, which the model is not allowed to modify. We allow only \texttt{propext}, \texttt{Classical.choice}, and \texttt{Quot.sound}.

\paragraph{Experiment results.}
Table~\ref{tab:stage3-results}(a) shows the main factorial results.
The aggregated success is associated with access to the research library.
Across all proof and reasoning conditions, PML succeeds on 79/100 runs, compared with 48/100 for P; PML-Oracle succeeds on 81/100.
Importantly, the PML advantage remains when no natural-language proof is supplied: PML succeeds on 40/50 runs compared with 23/50 for P.
With a supplied proof, the corresponding numbers are 39/50 and 25/50.

Table~\ref{tab:stage3-results}(b) shows that tasks with heavier dependence on existing library benefit more from library access.
Task B is solved reliably even with P, while Task D and E are never solved by P in any of their 20 trajectories.
PML solves all 20 Task D trajectories and 5/20 Task E trajectories.
The corresponding successful PML proofs have library dependency shares of 91.9\% and 87.3\%, respectively, showing substantial reuse of the formal research library.
PML-Oracle further increases Task E success from 5/20 to 9/20.
For Task E, all PML and PML-Oracle successes occur at high reasoning effort; both remain at 0/10 under medium reasoning. This suggests that retrieval and reasoning effort can be separate bottlenecks.

\begin{table}[t]
  \centering
  \footnotesize
  \caption{
    Results across resource conditions.
    Left: aggregate success by proof availability and reasoning effort.
    Right: task-level success (H/M denotes high/medium reasoning, each out of
    ten trajectories) and proof-relevant library reuse; library share is
    computed over successful proofs.
  }
  \label{tab:stage3-results}

  \begin{minipage}[t]{0.39\columnwidth}
    \vspace{0pt}
    \centering
    \textbf{(a) Aggregate success}\\[4pt]

    \setlength{\tabcolsep}{2.5pt}
    \renewcommand{\arraystretch}{1.08}
    \begin{tabular}{lcccc}
      \toprule
      & \multicolumn{4}{c}{Success} \\
      \cmidrule(lr){2-5}
      Condition
          & \vphantom{\shortstack{Success\\H/M}} P
          & PML
          & \shortstack[c]{PML-\\Oracle}
          & Overall \\
      \midrule
      Proof / H
        & 15/25 & 22/25 & 25/25 & 62/75 \\
      No proof / H
        & 14/25 & 22/25 & 24/25 & 60/75 \\
      Proof / M
        & 10/25 & 17/25 & 16/25 & 43/75 \\
      No proof / M
        &  9/25 & 18/25 & 16/25 & 43/75 \\
      Overall
        & 48/100 & 79/100 & 81/100 & 208/300 \\
      \bottomrule
    \end{tabular}
  \end{minipage}
  \hspace{1.75em}
  \begin{minipage}[t]{0.56\columnwidth}
    \vspace{0pt}
    \centering
    \textbf{(b) Task-level success and reuse}\\[4pt]

    \setlength{\tabcolsep}{2.2pt}
    \renewcommand{\arraystretch}{1.08}
    \begin{tabular}{cccccc}
      \toprule
      & P
      & \multicolumn{2}{c}{PML}
      & \multicolumn{2}{c}{PML-Oracle} \\
      \cmidrule(lr){2-2}
      \cmidrule(lr){3-4}
      \cmidrule(lr){5-6}
      Task
        & \shortstack{Success\\H/M}
        & \shortstack{Success\\H/M}
        & \shortstack{Lib.\\share}
        & \shortstack{Success\\H/M}
        & \shortstack{Lib.\\share} \\
      \midrule
      A
        &  9/4 &  9/6  & 16.8\% & 10/5  & 16.1\% \\
      B
        & 10/9 & 10/9  & 57.6\% & 10/9  & 41.6\% \\
      C
        & 10/6 & 10/10 & 52.7\% & 10/8  & 50.7\% \\
      D
        &  0/0 & 10/10 & 91.9\% & 10/10 & 93.3\% \\
      E
        &  0/0 &  5/0  & 87.3\% &  9/0  & 87.4\% \\
      \bottomrule
    \end{tabular}
  \end{minipage}
\end{table}

\begin{table}[htbp]
\centering
\small
\caption{API efficiency.}
\label{tab:stage3-api-efficiency}
\smallskip
\renewcommand{\arraystretch}{1.15}
\setlength{\tabcolsep}{6pt}
\begin{tabular}{@{}lcc@{}}
\toprule
Condition & Valid proofs & \shortstack{API cost /\\ valid proof} \\
\midrule
P          & 48/100 & \$2.76 \\
PML        & 79/100 & \$1.95 \\
PML-Oracle & 81/100 & \$1.78 \\
\bottomrule
\end{tabular}
\end{table}

Table~\ref{tab:stage3-api-efficiency} reports aggregate API cost.
Compared with P, PML has a higher success rate and a lower API cost per valid proof; PML-Oracle performs even slightly better.
The API cost per valid proof is measured as the total API cost over all scheduled 100 runs divided by the number of kernel-validated proofs.

%% file: supporting/e01s_brief.tex
\subsection{Mathematical reading with library-derived evidence}
\label{sec:library-value}
\label{sec:reading-method}

After a closed-book screening, we evaluate Qwen3.6-27B on 404 questions about
individual papers and 204 questions on five closely related papers
on language generation and noise, chosen for their dense connections through
shared definitions and results. 
The source papers are listed in Appendix~\ref{app:experiment-details}.
For individual papers, we supplement excerpts
with relevant Lean definitions and theorem statements selected using the
questions' recorded sources. For the five-paper collection, we test whether a
paper map summarizing theorems and their relationships from library imports
and dependencies helps the model read. We compare the same excerpts with no
map, the relevant map, or a \emph{sham map}: Lean statements from outside the
collection, matched to the relevant map's heading and character length.
For details, see Appendix~\ref{app:experiment-details}.

\begin{table}[htbp]
\centering
\caption{Lean evidence: 404 questions.}
\label{tab:exp-lean}
\smallskip
\small
\renewcommand{\arraystretch}{1.15}
\setlength{\tabcolsep}{6pt}
\begin{tabular}{@{}lr@{}}
\toprule
Information supplied & Accuracy (\%) \\
\midrule
Question only & 41.20 \\
Paper excerpts & 70.09 \\
Paper + unrelated Lean & 71.12 \\
Paper + Lean in file order & 73.45 \\
\textbf{Paper + relevant Lean} & \textbf{78.55} \\
\bottomrule
\end{tabular}
\end{table}

\begin{table}[htbp]
\centering
\caption{Paper map: 204 questions.}
\label{tab:exp-map}
\smallskip
\small
\renewcommand{\arraystretch}{1.15}
\setlength{\tabcolsep}{6pt}
\begin{tabular}{@{}lr@{}}
\toprule
Map supplied & Accuracy (\%) \\
\midrule
No map & 60.26 \\
\textbf{Relevant map} & \textbf{70.61} \\
Sham map & 60.51 \\
\bottomrule
\end{tabular}
\end{table}
Relevant Lean raises accuracy from 70.09\% with excerpts alone to
\textbf{78.55\%} (Table~\ref{tab:exp-lean}); excluding 15 questions with
out-of-scope or study-derived evidence leaves accuracy at 78.40\%.
The paper map raises accuracy from 60.26\% to \textbf{70.61\%}, compared
with 60.51\% for the sham map (Table~\ref{tab:exp-map}), with gains mainly
outside questions requiring two papers. Together, these results suggest that
curated library content helps mathematical reading. They measure the benefit
of the selected information; an equally informative natural-language comparison
would clarify the contribution of formal representation.

%% file: supporting/appendix_roadmap.tex
\section*{Appendix Roadmap}
The three appendices follow the order of the main-text studies:
\begin{description}
\item[Appendix~\ref{app:proof-status}: Theoretical findings (Section~\ref{sec:our-results}).]
The research design and workflow come first in Appendix~\ref{app:research-method}.
Complete written proofs and their Lean verification status then follow the three
case studies in Sections~\ref{sec:precision}--\ref{sec:new-mathematics}.
\item[Appendix~\ref{app:autoformalization}: 
Lean proof generation (Section~\ref{sec:autoformalization}).]
Additional experiment details on theorem-task generation and selection, author-visible resources and prompts, and additional experiment results.
\item[Appendix~\ref{app:experimental-documentation}: Mathematical reading (Section~\ref{sec:library-value}).]
Source papers, reader configuration, and accuracy calculation accompany
the controlled reading experiments.
\end{description}

%% file: supporting/appendix_proof_guide.tex
\section{Theoretical Findings: Research Workflow and Proofs}
\label{app:proof-status}

This appendix first describes the research design and workflow used to
develop and check the findings in Section~\ref{sec:our-results}.
Complete written proofs follow in the order of the three main-text cases.

\input{supporting/appendix_research_workflow.tex}

\paragraph{Overview of proofs and verification.}
Table~\ref{tab:proof-guide} locates each proof and summarizes its
Lean verification.

\begin{table}[ht]
\centering
\caption{Proofs of the featured findings and scope of formal verification.}
\label{tab:proof-guide}
\small
\begin{tabular}{@{}>{\raggedright\arraybackslash}p{0.30\linewidth}>{\raggedright\arraybackslash}p{0.15\linewidth}>{\raggedright\arraybackslash}p{0.47\linewidth}@{}}
\toprule
Finding & Proof & Lean scope \\
\midrule
Section~\ref{sec:precision}: local invariant repair & Appendix~\ref{app:pareto-repair} & Counterexample, corrected score bound, and the repaired generation guarantee, including repeated observations. \\
Theorem~\ref{thm:proper-replay}: finite proper replay & Appendix~\ref{app:replay} & Three-language impossibility, both directions of the characterization, and finite-state consequences. \\
Theorem~\ref{thm:staircase-frontier}: exact staircase frontier & Appendix~\ref{app:p29-frontier} & Exact guarantees, lower bounds for arbitrary deterministic generators, and Pareto optimality. \\
\bottomrule
\end{tabular}
\end{table}

\paragraph{Proof materials.}
All mathematical results proved in this appendix have been formalized and
verified in Lean. The complete Lean proofs and instructions for reproducing
the verification are available at \url{https://github.com/pengzhang91/genlimitlib-experiments}.

\subsection{Identifying and repairing a proof gap in Pareto-optimal generation}
\label{app:pareto-repair}

In this subsection we give a repair to the proof of the generation-time
theorem of \citet[][P13]{charikar2026pareto}. During formalization we
found that published Claim~7 (arXiv Claim~3.2), used in published
Theorem~8 (arXiv Theorem~4), can fail when Procedure~1 stores an empty
subcollection to indicate that no candidate exists. We explain the
claim's role, exhibit the failure, and prove the numerical bound needed
to complete the generation argument. The repaired proof preserves the
original algorithm and its generation-time guarantee. References below
use the published ALT~2026 numbering.

\subsubsection{The insertion procedure and the original claim}

Let $L_1,L_2,\ldots$ be infinite languages over a countable universe $U$.
Procedure~1 inserts these languages one at a time into a finite ordered
list. It assigns each inserted language $L_i$ a nonnegative integer
$m_i$ and a stored subcollection $C_i$; both remain fixed after that
insertion. Here $m_i$ denotes the source paper's $m^*(L_i)$. The order,
and hence the languages preceding $L_i$, can change when later languages
are inserted.

Identify a subcollection with its set of indices. For a finite
index set $S$, write
\[
 I(D)=\bigcap_{j\in D}L_j,
 \qquad
 \mathcal C_i(S)=
 \{D\subseteq S:i\in D\text{ and }I(D)\text{ is finite}\}.
\]
Thus a member of $\mathcal C_i(S)$ is a subcollection containing $L_i$
whose intersection is finite. Define
\[
 M_i(S)=\max\bigl(\{0\}\cup
       \{|I(D)|:D\in\mathcal C_i(S)\}\bigr).
\]
The maximum exists because $S$ has finitely many subcollections. If
$\mathcal C_i(S)$ is nonempty, it is the largest finite intersection
size attained by a candidate. If the candidate set is empty, its value
is zero by convention. In particular, $M_i(S)=0$ does not distinguish
an empty candidate set from a nonempty candidate set whose intersections
are all empty.

Suppose a new language $L_q$ is immediately to the right of $L_i$ during
its insertion, and let $S_i$ consist of $i$ and all indices preceding
$i$ in the old order. The procedure computes
\[
 m_{\mathrm{check}}=M_q(S_i\cup\{q\})
\]
and moves $q$ to the left of $i$ if
$m_{\mathrm{check}}\le m_i$. It repeats this comparison while moving
leftward. The checked maximum is recomputed in the relevant prefix;
it need not equal the score eventually assigned to $q$. Once the
insertion ends, let $S_q$ be the prefix through $q$. The procedure sets
$m_q=M_q(S_q)$ and chooses $C_q\in\mathcal C_q(S_q)$ attaining this
maximum if a candidate exists. Otherwise it sets $C_q=\varnothing$
and $m_q=0$.

Claim~7 asserts that the stored subcollection for an old language
continues to be a maximizing candidate in its current prefix. In the
notation above, its maximizing-witness assertion is
\[
 C_i\in\mathop{\rm arg\,max}_{D\in\mathcal C_i(S_i)} |I(D)|.
\]
The proof of Theorem~8 uses this assertion to bound the size of a finite
intersection involving $L_i$. Specifically, the generator scans the
ordered languages, retaining a language when it contains every observed
point and its addition leaves the retained languages with an infinite
intersection. To show that the true target $L_i$ is retained after more
than $m_i$ distinct observations, the proof needs the implication
\[
 D\in\mathcal C_i(S_i)\quad\Longrightarrow\quad |I(D)|\le m_i.
\]
The assertion that the particular stored subcollection $C_i$ attains
the maximum is stronger than this implication.

\subsubsection{Failure of the maximizing-witness assertion}

Let $A$ and $B$ be disjoint infinite languages, inserted in that order.
At the first insertion, the only subcollection containing $A$ is
$\{A\}$, whose intersection is infinite. There is therefore no
candidate, and the procedure stores $C_A=\varnothing$ and $m_A=0$.

When $B$ is inserted, the subcollection $\{A,B\}$ has empty
intersection. It is a candidate for $B$ with score zero, whereas
$\{B\}$ still has infinite intersection. Consequently the checked
maximum is zero. The comparison $0\le m_A=0$ moves $B$ before $A$,
giving the order $(B,A)$. The prefix through $A$ now contains both
languages, and $\{A,B\}$ is a candidate for $A$ attaining the
maximum zero. However, the stored subcollection for $A$ is still
$\varnothing$. It does not contain $A$, so it is not a candidate and
cannot belong to the displayed argmax.

The failure is the identification of an empty stored subcollection
with a maximizing candidate. The empty subcollection contains no
languages; the candidate $\{A,B\}$ contains two languages whose
intersection contains no points. Assigning score zero in both cases
does not make the two subcollections interchangeable. The numerical
inequality needed by Theorem~8 remains true in this execution: every
candidate in the prefix through $A$ has intersection size at most
$m_A=0$.
The published induction controls the maximum value, but this alone
does not ensure that the stored subcollection is still a candidate.

\subsubsection{The corrected statement and its preservation}

The repair replaces the assertion about a stored maximizing
subcollection by an upper bound on every candidate's intersection
size. The following lemma provides the induction step: it shows
that the numerical bound persists when a new language enters an
old language's prefix.

\begin{supplem}[Preservation of the score bound]
\label{lem:pareto-score-invariant}
Let $S$ be a finite set of indices and let $b\in\mathbb N$.
Suppose that every $D\in\mathcal C_i(S)$ satisfies $|I(D)|\le b$.
If $q\notin S$ and $M_q(S\cup\{q\})\le b$, then every
$D\in\mathcal C_i(S\cup\{q\})$ also satisfies $|I(D)|\le b$.
\end{supplem}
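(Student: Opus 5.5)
Take an arbitrary $D\in\mathcal C_i(S\cup\{q\})$, so $D\subseteq S\cup\{q\}$, $i\in D$, and $I(D)$ is finite. The goal $|I(D)|\le b$ will follow from a case split on whether $q\in D$.

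\emph{Case $q\notin D$.} Then $D\subseteq S$, and still $i\in D$ with $I(D)$ finite. Hence $D\in\mathcal C_i(S)$, and the hypothesis on $S$ gives $|I(D)|\le b$ directly.

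\emph{Case $q\in D$.} Here we use the hypothesis on $q$ instead. We have $D\subseteq S\cup\{q\}$, $q\in D$, and $I(D)$ finite, so $D\in\mathcal C_q(S\cup\{q\})$. By the definition of $M_q$ as a maximum over $\{0\}$ together with the candidate sizes, every candidate satisfies
\[
 |I(D)|\le M_q(S\cup\{q\})\le b.
\]
Note that the condition $i\in D$ plays no role in this case. What matters is only that $D$ is a candidate for $q$. This is the observation that makes the numerical invariant survive, even though the stored witness $C_i$ may cease to be a candidate.

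The argument contains no genuine obstacle. The only point needing care is to check that the second case really yields a member of $\mathcal C_q(S\cup\{q\})$, which uses exactly the three defining conditions. It is also worth noting that $q\notin S$ is needed only to keep the index sets well formed as a fresh insertion; the bound itself does not depend on it. In the Lean setting I would phrase both cases as instances of a single monotonicity fact: every element of $\mathcal C_j(T)$ has size at most $M_j(T)$. The first case then uses this fact together with the hypothesis on $S$, and the second uses it together with the hypothesis $M_q(S\cup\{q\})\le b$.
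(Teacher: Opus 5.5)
Your proof is correct and follows the paper's argument: split on whether $q\in D$; if $q\notin D$, then $D\in\mathcal C_i(S)$ and the hypothesis on $S$ applies; if $q\in D$, then $D\in\mathcal C_q(S\cup\{q\})$, so $|I(D)|\le M_q(S\cup\{q\})\le b$. Your remark that $q\notin S$ is never used is also right (if $q\in S$ the conclusion is just the hypothesis), though in your Lean remark the first case uses the hypothesis on $S$ directly rather than any bound by $M_i(S)$.
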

\begin{proof}
Fix $D\in\mathcal C_i(S\cup\{q\})$. There are two cases.
If $q\notin D$, then $D\subseteq S$, and it still contains $i$
and has finite intersection. Thus $D\in\mathcal C_i(S)$, so the
assumed bound gives $|I(D)|\le b$.

If $q\in D$, then $D\subseteq S\cup\{q\}$ contains $q$ and has
finite intersection. It is therefore also a candidate in the
optimization defining $M_q(S\cup\{q\})$. Hence
\[
 |I(D)|\le M_q(S\cup\{q\})\le b.
\]
Both cases establish the required inequality. The argument includes
$b=0$ and does not require a stored maximizing subcollection.
\end{proof}

\begin{suppprop}[Corrected form of Claim~7]
At every completed insertion stage, let $L_i$ be an inserted language,
let $m_i$ be its assigned score, and let $S_i$ be its current prefix,
including $i$. Then
\begin{equation}
\label{app:pareto-corrected-bound}
 \text{for every }D\subseteq S_i,\qquad
 i\in D\text{ and }I(D)\text{ finite}
 \quad\Longrightarrow\quad |I(D)|\le m_i.
\end{equation}
\end{suppprop}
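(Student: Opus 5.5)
We argue by induction on the number of completed insertion stages, maintaining the bound \eqref{app:pareto-corrected-bound} simultaneously for every inserted language. Two things must be handled: the stage at which $L_i$ itself receives its score, and every later stage at which another language $L_q$ is inserted and may land inside the prefix of $L_i$.

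\emph{Base case: the stage at which $L_i$ is inserted.} When the insertion of $L_i$ ends, the procedure sets $m_i=M_i(S_i)$, where $S_i$ is its prefix at that moment. By the definition of $M_i(S)$ as a maximum over $\{0\}\cup\{|I(D)|:D\in\mathcal C_i(S_i)\}$, every candidate $D$ satisfies $|I(D)|\le m_i$. This holds whether or not a stored subcollection exists: if $\mathcal C_i(S_i)$ is empty, the bound is vacuous and $m_i=0$ is consistent with it. No property of $C_i$ is used.

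\emph{Inductive step: a later insertion of $L_q$.} Fix an old language $L_i$ whose bound holds before $L_q$ is inserted. The relative order of old languages is unchanged, since insertion only moves $q$. Let $S$ be the old prefix of $i$. There are two outcomes.
\begin{itemize}
\item If $q$ ends to the right of $i$, the new prefix of $i$ is still $S$, so the bound is unchanged.
\item If $q$ ends to the left of $i$, the new prefix is $S\cup\{q\}$. We then apply Lemma~\ref{lem:pareto-score-invariant} with $b=m_i$.
\end{itemize}

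The hypothesis needed by the lemma is $M_q(S\cup\{q\})\le m_i$. To obtain it, consider the moment $q$ moved past $i$. At that point $q$ sat immediately to the right of $i$, and the procedure computed $m_{\mathrm{check}}=M_q(S_i\cup\{q\})$, where $S_i$ is the prefix through $i$ in the old order, which is exactly $S$. It moved $q$ left only because $m_{\mathrm{check}}\le m_i$. This gives exactly the needed hypothesis, and the lemma yields the bound on $\mathcal C_i(S\cup\{q\})$. Further leftward moves of $q$, past languages preceding $i$, do not change the prefix of $i$, which stays $S\cup\{q\}$.

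The main obstacle is this bookkeeping step: verifying that the prefix used in the procedure's comparison coincides with the old prefix $S$ of $i$. One must check that, when $q$ is compared against $i$, all languages to the left of $i$ are exactly the old ones preceding $i$; this holds because $q$ is the only moving element and is currently to the right of $i$. One must also check that $m_i$ never changes after assignment, which is part of the procedure's specification. With these two facts, the comparison condition matches the lemma's hypothesis, and the induction closes for all inserted languages at every completed stage.
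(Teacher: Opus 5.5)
Your proposal is correct and follows the paper's proof essentially step for step. You induct over insertions and handle the newly inserted language directly from $m_q=M_q(S_q)$; for an old language, either its prefix and score are unchanged, or, when $q$ crosses it, its prefix becomes $S_i\cup\{q\}$ and Lemma~\ref{lem:pareto-score-invariant} with $b=m_i$ applies via the crossing comparison $M_q(S_i\cup\{q\})\le m_i$.
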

\begin{proof}
We induct on the number of insertions. The assertion is vacuous
before any language is inserted. Suppose it holds for the old
order and insert $L_q$.
For the new language, its assigned score is $m_q=M_q(S_q)$, so the
bound follows directly from the definition of the maximum in its
final prefix $S_q$. In particular, it is vacuous when there is no
candidate and remains valid when the maximum is zero.

Consider an old language $L_i$. If $q$ does not move to the left of
$i$, its prefix and assigned score remain unchanged, and the
inductive hypothesis applies. If $q$ moves to the left of $i$, let
$S_i$ be its old prefix. The relative order of all old languages
is unchanged, so its new prefix is exactly $S_i\cup\{q\}$.
When $q$ crosses $i$, the procedure's comparison gives
\[
 M_q(S_i\cup\{q\})\le m_i.
\]
The inductive hypothesis bounds every $i$-candidate in $S_i$ by
$m_i$. Applying Lemma~\ref{lem:pareto-score-invariant} with
$b=m_i$ proves the bound in its new prefix. Subsequent movements
of $q$ farther to the left change neither this set of indices nor
$m_i$. This covers every old language and completes the induction.
\end{proof}

\subsubsection{Recovery of the generation-time guarantee}

For completeness, we give the step connecting the corrected bound
to the original generation algorithm. Let $H\subseteq L_i$ be the
finite set of distinct observations from the target $L_i$, and suppose
that $i$ is included in the finite list currently being scanned and
that $|H|\ge m_i+1$. Let $B$ be the indices retained before the
scan reaches $i$. Every retained language contains $H$, and every
index in $B$ precedes $i$. Consequently
\[
 B\cup\{i\}\subseteq S_i,
 \qquad
 H\subseteq I(B\cup\{i\}).
\]
The target satisfies the sample-consistency test because
$H\subseteq L_i$. If the scan rejected it, its intersection with
the previously retained languages would be finite. Then
$B\cup\{i\}$ would be an $i$-candidate, and
\eqref{app:pareto-corrected-bound} would imply
\[
 m_i+1\le |H|
       \le |I(B\cup\{i\})|
       \le m_i,
\]
a contradiction. Thus the scan retains $L_i$.

The scan never removes a retained language and only adds a language
when the common intersection remains infinite. Its final
intersection is therefore infinite and, because it includes $L_i$
among the intersected languages, is a subset of $L_i$. Removing
the finite observed set leaves a point in $L_i\setminus H$.
The original generator's choice of such a point is consequently a
fresh target output.

In Theorem~8, the generator considers the first $f(r)$ languages
after $r$ observations, where $f$ is nondecreasing and unbounded.
Observations may repeat; let $H_r$ be the set of distinct points
observed by round $r$. Write $g(i)=\min\{r:f(r)\ge i\}$ for
the first round at which $L_i$ is included. If
\[
 |H_r|\ge t_i:=\max\{g(i),m_i+1\},
\]
then $r\ge |H_r|\ge g(i)$, so $f(r)\ge i$, and $|H_r|>m_i$.
The target is therefore under consideration and the preceding
argument guarantees a fresh target output. This proves the same
bound in terms of distinct observations as in the source paper,
including presentations with repetitions.
The score assignments, insertion comparisons, greedy
scan, and output rule all remain unchanged. Only the intermediate
proof assertion is replaced.

\paragraph{Formal verification.}
The two-language counterexample, the corrected score bound, and the
repaired generation guarantee, including presentations with repeated
observations, have been formalized and verified in Lean.
Effective oracle access and running-time complexity are outside this
verification scope.

%% file: supporting/appendix_research_workflow.tex
\subsection{Research design and workflow}
\label{app:research-method}
We supplied advanced reasoning models, including GPT-6 Astra with Ultra
reasoning and GPT-6 Pro, with the Lean library, paper maps, and relevant source
manuscripts. The models compared definitions and assumptions across papers,
looked for connections and simpler arguments, and proposed new questions.
Promising directions were developed through successive rounds of statement
refinement, proof construction, example testing, and review. The number of
iterations and the supplied material varied with the question.

The workflow combined mathematical exploration with focused checks. Follow-up
sessions examined proofs, searched for counterexamples, ran finite checks, and
formalized selected statements in Lean. For each featured finding,
the following proof subsections identify its source material, additional
mathematical contribution, written proof, and Lean verification status.

%% file: supporting/appendix_replay.tex
\subsection{Proper generation under replay}
\label{app:replay}
\newcommand{\replayset}[1]{\{#1\}}

This subsection proves the three-language impossibility discussed in
Section~\ref{sec:connections} and the finite-family characterization in
Theorem~\ref{thm:proper-replay}. We first specify the replay model, then
explain why the three-language construction prevents successful generation,
and finally identify exactly when a finite family admits a successful
learner. The model follows~\citet[][P22]{replay}; the three-language family
comes from~\citet[][P31]{memory}. The three-language impossibility, the
general characterization, and its consequences below have all been
formalized and verified in Lean.

\subsubsection{The replay model}

Let $\mathcal H=(L_1,\ldots,L_N)$, with $N\ge1$, be a finite family of
infinite subsets of a countable universe $U$, and let
$X=\bigcup_{i=1}^N L_i$. Different indices may name the same language.
The target is one fixed member $L_i$ of the family. At round $n\ge0$,
the learner first receives a point $x_n\in X$ and then outputs an index
$j_n\in[N]$. Thus a proper learner always proposes one of the languages
in $\mathcal H$, although that language need not contain every point
observed so far.

A \emph{complete replay text} for target $L_i$ satisfies
\begin{align}
 x_n&\in L_i\cup\bigcup_{t<n}L_{j_t}
       \qquad(n\ge0),\label{app:replay:eq:legal}\\
 L_i&\subseteq\replayset{x_n:n\ge0}.\label{app:replay:eq:full}
\end{align}
The first condition allows each observation to come either from the target
or from any language previously proposed by the learner. The second
condition requires every target point eventually to appear. Observations
may repeat, and there is no bound on how long the adversary may delay a
particular target point. Because there is no earlier output at round zero,
the first observation satisfies $x_0\in L_i$.

The learner succeeds if $L_{j_n}\subseteq L_i$ for all sufficiently large
$n$. It need not identify the target's index or output the whole target
language. We initially allow any deterministic learner that uses its full
input history. No computability restriction is imposed on the family or
on the learner. The finite-state construction below will use only the
membership profile of each observation, namely the set of languages that
contain it.

\subsubsection{The three-language impossibility}
\label{app:replay:construction}

The construction from the bounded-memory paper gives a useful first
example of what replay can hide. After the first output, the same complete
input stream can be legal for two targets that have no common allowable
output. The argument applies to learners with full history.

\begin{suppprop}[Three-language impossibility]
\label{app:replay:prop:triangle}
Let $T\subseteq U$ be infinite, and let $a_1,a_2,a_3\in U\setminus T$
be distinct. The family $L_i=T\cup\replayset{a_s:s\ne i}$, for
$i\in\{1,2,3\}$, does not admit deterministic proper generation in the
limit with replay.
\end{suppprop}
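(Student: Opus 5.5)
The plan is to build a single input stream that is a legal, complete replay text for two different targets at once. A deterministic learner produces the same outputs on both, and no member of the family is contained in both targets, so it must fail on at least one of them. Only the learner's first output $j_0$ is used; the only role of replay is to legitimise one point that lies outside one of the two targets.

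\textbf{Step 1 (first round).} Fix an enumeration of $X=T\cup\{a_1,a_2,a_3\}$. Let $x_0$ be some point of $T$. Since $T\subseteq L_1\cap L_2\cap L_3$, this is legal for every target under \eqref{app:replay:eq:legal}. The deterministic learner answers with some index $j_0$. Let $i,k$ be the two indices in $\{1,2,3\}\setminus\{j_0\}$.

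\textbf{Step 2 (the rest of the stream).} For $n\ge1$, let $x_n$ run through the enumeration of $X$, so every point of $X$ eventually appears. We check \eqref{app:replay:eq:legal} for target $L_i$; the case of $L_k$ is symmetric.
\begin{itemize}
\item Points of $T$ and $a_s$ with $s\ne i$ lie in $L_i$.
\item The remaining point $a_i$ lies in $L_{j_0}=T\cup\{a_s:s\ne j_0\}$ because $i\ne j_0$, and $L_{j_0}$ is a previously proposed language for every $n\ge1$.
\end{itemize}
Completeness \eqref{app:replay:eq:full} holds trivially for every target because the stream exhausts $X$. Hence the stream is a complete replay text for both $L_i$ and $L_k$, whatever the learner outputs after round $0$.

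\textbf{Step 3 (contradiction).} Because the learner is deterministic and the stream is the same for both targets, it produces one output sequence $(j_n)$. If it succeeded on both targets, then for all large $n$ we would have $L_{j_n}\subseteq L_i\cap L_k=T\cup\{a_{j_0}\}$. But every $L_h$ contains two of the three points $a_s$, so no family member lies in $T\cup\{a_{j_0}\}$. Thus the learner fails for $L_i$ or for $L_k$.

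The only step needing care is the legality check in Step 2, namely that the single foreign point $a_i$ (respectively $a_k$) is covered by the first proposal $L_{j_0}$. This works because $j_0$ differs from both chosen targets.
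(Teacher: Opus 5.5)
Your proof is correct and follows essentially the same route as the paper's. Both start with a first point in $T$, take the two indices other than the first output $j_0$, enumerate $T\cup\{a_1,a_2,a_3\}$ as a legal complete replay text for both of those targets (legality uses only that $a_i, a_k \in L_{j_0}$), and derive the contradiction from the fact that no $L_h$ lies in $L_i\cap L_k=T\cup\{a_{j_0}\}$.
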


\begin{proof}
Suppose that a learner succeeds for this family. Give it a first point
$x_0\in T$, and let its first output be $j$. Write $p$ and $q$ for the
two indices different from $j$. Continue the input by enumerating every
point of $S=T\cup\replayset{a_1,a_2,a_3}$.

This is a complete legal replay text for either fixed target $L_p$ or
fixed target $L_q$. Indeed, the first point belongs to both targets,
and $S=L_p\cup L_j=L_q\cup L_j$. Every subsequent point is therefore
either a target point or a point of the first output language $L_j$,
which is already eligible for replay. The continuation also contains
every point of each target. Legality uses only the first output; it
places no restriction on the learner's later choices.

The learner sees the same input history in the two cases, so it produces
the same output sequence. If it succeeded for both targets, then after
the larger of the two success times every output language would be
contained in $L_p\cap L_q=T\cup\replayset{a_j}$. But every language in
the family contains two of the three exceptional points, so none is
contained in this intersection. This is a contradiction.
\end{proof}

\subsubsection{The finite-family characterization}

The preceding proof identifies the relevant ambiguity: two targets can
have the same union with the learner's first output language. Equivalently,
they agree outside that language. We now formulate a condition requiring
all targets that remain indistinguishable in this way to admit a common
allowable output.

For $x\in X$, define its membership profile by
$\operatorname{prof}(x)=\replayset{i\in[N]:x\in L_i}$, and let
$\mathcal P=\replayset{\operatorname{prof}(x):x\in X}$.
Every $P\in\mathcal P$ is nonempty. If the first observation has profile
$P$, precisely the indices in $P$ are possible targets at that point.
For a proposed first output $j$, write
$R_j(i)=L_i\setminus L_j$ for the part of target $L_i$ outside that
output. For each $i\in P$, define
\[
 E_j(i;P)=\replayset{k\in P:R_j(k)=R_j(i)}.
\]
This is the group of possible targets having the same residual as $L_i$.
The groups partition $P$. A common allowable output for such a group
is an index $h\in[N]$ satisfying $L_h\subseteq L_k$ for every
$k\in E_j(i;P)$. The language $L_h$ need not equal their intersection,
and its index need not belong to the group.

\begin{suppthm}[Exact finite characterization]
\label{app:replay:thm:main}
The family $\mathcal H$ admits deterministic proper generation in the
limit with replay if and only if the following condition holds:
for every $P\in\mathcal P$, there is an index $j\in[N]$ such that,
for every $i\in P$, there is an index $h\in[N]$ with
\[
 L_h\subseteq\bigcap_{k\in E_j(i;P)}L_k.
\]
The index $j$ need not belong to $P$.
\end{suppthm}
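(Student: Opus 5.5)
The plan is to prove the two directions separately. Necessity adapts the indistinguishability argument of Proposition~\ref{app:replay:prop:triangle} to an arbitrary group. Sufficiency builds a learner whose first output is a fixed index $j(P)$ depending only on the profile of the first point. After that, the learner identifies in the limit the residual group containing the target.

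\textbf{Necessity.} Fix a successful deterministic learner and some $P\in\mathcal P$. Choose $x\in X$ with $\operatorname{prof}(x)=P$, and let $j$ be the learner's output after seeing $x_0=x$; this $j$ is the index we claim works for $P$. Fix $i\in P$ and write $E=E_j(i;P)$. For every $k\in E$ we have $L_k\cup L_j=R_j(i)\cup L_j$, so all targets in the group have the same union $S$ with $L_j$. Feed the stream $x_0=x$ followed by an enumeration of $S$. Exactly as in the proof of Proposition~\ref{app:replay:prop:triangle}, this is a complete legal replay text for every target $L_k$ with $k\in E$:
\begin{itemize}
\item $x\in L_k$ because $k\in P$;
\item every later point lies in $L_k\cup L_{j_0}$;
\item every point of $L_k$ appears.
\end{itemize}
Determinism gives a single output sequence. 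Since $E$ is finite, there is a time after the maximum of the success times for all $k\in E$. Any output $h$ after that time satisfies $L_h\subseteq\bigcap_{k\in E}L_k$, which is the required condition.

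\textbf{Sufficiency.} On the first point the learner reads $P=\operatorname{prof}(x_0)$ and outputs $j=j(P)$. For each group $E$ of $P$ it fixes an index $h_E$ with $L_{h_E}\subseteq\bigcap_{k\in E}L_k$, and it writes $U_E=L_k\cup L_j$ for this common union. It then maintains three objects:
\begin{itemize}
\item the sample $H_n$ of distinct points seen so far;
\item the set $O_n$ of indices output so far, which always contains $j$;
\item the set $W_n=\bigcup_{t\in O_n}L_t$.
\end{itemize}
A group $E$ is \emph{consistent} at time $n$ if $H_n\subseteq U_E\cup W_n$. The learner fixes a priority order on the finitely many groups. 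At each round it outputs $h_E$ for the highest-priority consistent group $E$ that is minimal with respect to inclusion of $U_E\cup W_n$.

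Legality gives $H_n\subseteq L_i\cup W_n$ for the true target $i$, so the true group $E^*$ is always consistent. Every output $h_E$ satisfies $L_{h_E}\subseteq U_E$. The index set $O_n$ is monotone and finite, so it stabilizes to some $O$ with union $W$. Once the full target $L_i$ has appeared, the consistent groups are fixed: they are those $E$ with $L_i\subseteq U_E\cup W$, modulo finitely many replayed points. From that point on the learner's choice stabilizes to one group $E$, and it outputs the single index $h_E$.

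\textbf{Main obstacle.} The remaining step is to show that this limiting $h_E$ satisfies $L_{h_E}\subseteq L_i$. This is delicate when the learner previously output some $h_{E'}$ for a wrong group $E'$. Such an output can enlarge $W$ and make a wrong group $E\neq E^*$ consistent and minimal. I would first show that any output ever made remains consistent with the true target: since $L_{h_{E'}}\subseteq U_{E'}$, replayed points never leave $\bigcup_E U_E=L_j\cup\bigcup_{k\in P}L_k$. I would then use a potential argument on $O$. If the limit group were wrong, the learner switched to it only because its union was minimal, and minimality together with $L_i\subseteq U_{E^*}$ should force $U_E\cup W=U_{E^*}\cup W$. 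The aim is to show that this equality, combined with the grouping by residuals outside $L_j$, forces $E=E^*$.

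If this fails, my fallback is a more cautious learner. It keeps outputting $j$ until the candidate set of groups has stayed unchanged for longer than a growing counter, and only then commits to $h_E$. Each wrong commitment would then shrink a finite potential, so the learner can err only finitely often. The finite-state consequences mentioned in Table~\ref{tab:proof-guide} would then follow, because the learner depends only on membership profiles and on the finite data $(P,O_n,\text{current group})$.
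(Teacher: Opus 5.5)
Your necessity argument is the same as the paper's and is correct. The gap is in sufficiency. The step you call the main obstacle is not just unproven; it is false for your learner.

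Here is a counterexample. Take disjoint infinite sets $Z,A,B,C$, and let $L_1=Z$, $L_2=Z\cup A\cup B$, $L_3=Z\cup A\cup C$. With $j=1$, all residuals are distinct, so the groups are singletons and $h_{\{k\}}=k$ is a legitimate choice. Rank $\{3\}$ first in your priority order, and use the replay-free text $z,a,b,\dots$ for target $L_2$.
\begin{itemize}
\item After $a$, the groups $\{2\}$ and $\{3\}$ are both consistent, and their unions $L_2$ and $L_3$ are incomparable, so you output $3$.
\item After $b$, only $\{2\}$ is consistent, so you output $2$.
\item Now $W_n=Z\cup A\cup B\cup C$, so $U_E\cup W_n=W_n$ for every group. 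All groups are consistent and tied forever, and you output $3$ forever, although $L_3\not\subseteq L_2$.
\end{itemize}
So $U_E\cup W=U_{E^*}\cup W$ does not force $E=E^*$. Your cautious fallback does not help either: the final tie is permanent, so no potential decreases. By contrast, the paper's learner outputs $j=1$ after $a$, because $A\cup B$ and $A\cup C$ are incomparable, and then outputs $2$ forever.

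The root cause has two parts.
\begin{itemize}
\item You commit to a group that is merely inclusion-minimal, so an output can contain points outside $L_j\cup L_{i_*}$.
\item To stay sound after such outputs, you must whitelist all of $W_n$. This erases exactly the evidence that separates the groups.
\end{itemize}

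The missing idea is the paper's invariant, which makes $W_n$ unnecessary. Update the candidate set only on observations outside $L_j$. Because equal residuals answer every such test identically, the candidate set is always a union of original groups, so $h(E)$ is defined for any selected group. Output $h(E)$ only when some surviving residual $R$ is \emph{least}, meaning contained in every surviving residual; otherwise output $j$.

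Since $i_*$ survives, $L_{h(E)}\setminus L_j\subseteq R\subseteq R_j(i_*)$. By induction, every output lies in $L_j\cup L_{i_*}$. Hence every observation outside $L_j$ is a genuine target point, and updates never remove $i_*$.

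Convergence then follows. Each witness $w_k\in R_j(i_*)\setminus R_j(k)$ eventually appears; finitely many suffice, and you never need ``the full target'' to have appeared. After that, $R_j(i_*)$ is least, and the selected group is permanently $E_j(i_*;P)$, whose chosen output lies in $L_{i_*}$.
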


\begin{proof}
\emph{Necessity.}
Fix a realized profile $P$ and choose a first point $x$ with this profile.
Let $j$ be the first output of a learner that succeeds for every target
and every complete legal replay text. Consider any group
$E=E_j(i;P)$, and let $R$ denote the residual shared by its members.
After $x$, supply an enumeration of $S=L_j\cup R$.

For each $k\in E$, we have $x\in L_k$ and
$S=L_j\cup L_k$. Consequently the continuation is legal for target
$L_k$, because the first output permits replay from $L_j$, and it is
complete because it enumerates all of $L_k$. These are different
possible fixed targets for one and the same input sequence. The
learner's outputs are therefore identical in all cases.

By success on target $L_k$, all sufficiently late outputs must be
contained in $L_k$. There are only finitely many indices in $E$, so
we can take the largest of their success times. Every output after
that time is a member of $\mathcal H$ contained in every $L_k$ with
$k\in E$. At least one such output exists, giving the required index
$h$. Since this argument applies to every group, the first output
$j$ satisfies the condition for $P$.

\emph{Sufficiency.}
Assume the stated condition. For each realized profile $P$, fix an
index $j(P)$ satisfying it. Also, for every nonempty set of indices
$E\subseteq[N]$ for which a common allowable output exists, fix one
such index $h(E)$. These choices are made once for the whole family.
In particular, $h(E)$ depends only on $E$, not on the history by which
the learner reaches that set.

On receiving its first observation, the learner sets
$P=\operatorname{prof}(x_0)$, outputs $j=j(P)$, and initializes the
candidate set $C=P$. It keeps the index $j$ for the rest of the run.
For every subsequent observation $x$, it proceeds as follows:
\begin{enumerate}
\item If $x\notin L_j$, replace $C$ by
      $\replayset{k\in C:x\in L_k}$. If $x\in L_j$, leave $C$
      unchanged.
\item Examine the residuals $R_j(k)$ for $k\in C$. If one of them
      is contained in every other surviving residual, let $R$ be
      this least residual and let
      $E=\replayset{k\in C:R_j(k)=R}$. Output $h(E)$.
\item If there is no least residual, output $j$.
\end{enumerate}
On an empty candidate set the learner may output $j$; we will show
that this case never occurs on a legal replay text.

The distinction between a \emph{least} residual and an
\emph{inclusion-minimal} residual is essential here. A least residual
is contained in every surviving residual. An inclusion-minimal one
merely has no strictly smaller survivor; other survivors may be
incomparable with it. The proof needs containment in the true
residual, so the rule uses a least residual and otherwise returns
to $j$.

We first justify that $h(E)$ in the rule is defined. Initially $C=P$.
An update tests membership only at a point outside $L_j$. Two indices
with equal residuals give the same answer to every such test. Thus
an update either retains an entire original group $E_j(i;P)$ or
removes the entire group. By induction, $C$ is always a union of
these original groups. If a least surviving residual exists, its
indices form one entire original group, for which the assumed
condition provides a common allowable output.

We next prove the property that makes the updates reliable. Fix the
true target $i_*$. At every stage, the candidate set contains $i_*$,
and every output language lies inside $L_j\cup L_{i_*}$. Both
assertions hold initially: $x_0\in L_{i_*}$ gives $i_*\in P$, and
the first output is $j$. Suppose they hold for all earlier outputs,
and consider the next observation $x$. By replay legality, $x$
belongs either to $L_{i_*}$ or to an earlier output language. Every
such earlier language lies in $L_j\cup L_{i_*}$ by the induction
hypothesis. Therefore, if $x\notin L_j$, necessarily
$x\in L_{i_*}$. The update retains the true target, so $C$ stays
nonempty.

It remains to check the new output. Outputting $j$ preserves the
claimed containment immediately. Otherwise the learner outputs
$h(E)$ for a least residual $R$. Choose $k\in E$. Since
$L_{h(E)}\subseteq L_k$ and the true target is still in $C$, we have
\[
 L_{h(E)}\setminus L_j
 \subseteq L_k\setminus L_j
 =R
 \subseteq L_{i_*}\setminus L_j.
\]
Hence $L_{h(E)}\subseteq L_j\cup L_{i_*}$, as required. This
completes the induction. In particular, the construction satisfies
\begin{equation}
 L_{j_n}\setminus L_{i_*}\subseteq L_j
       \qquad(n\ge0).\label{app:replay:eq:contain}
\end{equation}
Thus any point outside the target that an output makes available for
replay was already available from the first output. The proof
establishes that later observations outside $L_j$ are trustworthy;
it does not assume this when defining the candidate updates.

Finally, we prove convergence. For every initial candidate $k\in P$
such that $R_j(i_*)\nsubseteq R_j(k)$, choose a point
$w_k\in R_j(i_*)\setminus R_j(k)$. This point belongs to the true
target and lies outside $L_j$, but does not belong to $L_k$.
Completeness guarantees that it eventually appears, at which point
$k$ is removed if it has not been removed already. There are only
finitely many initial candidates, so there is a time after all these
witnesses have appeared. From then on, every surviving residual
contains $R_j(i_*)$. Since $i_*$ survives, its residual is a least
surviving residual.

Moreover, every initial candidate with residual equal to
$R_j(i_*)$ survives forever: each observation used for an update is
a true-target point outside $L_j$, and hence belongs to every one
of these equal residuals. The group selected by the learner from
that time onward is therefore the fixed set $E_j(i_*;P)$. Its
chosen output $h(E_j(i_*;P))$ is contained in $L_{i_*}$, because
$i_*$ is a member of the group. The learner stabilizes to this
correct index, proving sufficiency.
\end{proof}

The proof also gives quantitative information about the learner and
shows that outputs can be restricted to inclusion-minimal languages
in the family. We state these consequences separately to distinguish
them from the existence criterion.

\begin{suppcor}[Finite memory and output choices]
\label{app:replay:cor:implementation}
Whenever the condition of Theorem~\ref{app:replay:thm:main} holds,
there is a learner that reads only membership profiles, uses at
most $1+N2^N$ states, changes its output index at most $N$ times,
and stabilizes to a correct index. It can be chosen so that every
output language is inclusion-minimal in $\mathcal H$, meaning that
no language in $\mathcal H$ is its proper subset. The containment
property~\eqref{app:replay:eq:contain} continues to hold.
\end{suppcor}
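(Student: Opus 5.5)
The plan is to reuse the learner from the sufficiency half of Theorem~\ref{app:replay:thm:main}, after two normalizations of the fixed choices $j(P)$ and $h(E)$. We then read off the state count, the profile-only dependence and the change bound directly from its update rule. Since that proof holds for \emph{any} choice of $j(P)$ satisfying the condition and \emph{any} common allowable output $h(E)$, correctness and the containment~\eqref{app:replay:eq:contain} transfer verbatim once the normalized choices are shown to still be admissible.

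\emph{Step 1: inclusion-minimal outputs.}
Because $\mathcal H$ is finite, every $L_h$ contains some inclusion-minimal $L_{h'}\subseteq L_h$, since a strictly descending chain in a finite family terminates. For $h(E)$ we replace it by such an $h'$. Then $L_{h'}\subseteq L_{h(E)}\subseteq L_k$ for all $k\in E$, so $h'$ is still a common allowable output.

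For the first output $j=j(P)$ we likewise pick an inclusion-minimal $j'$ with $L_{j'}\subseteq L_j$, and check that $j'$ still satisfies the condition for $P$. The key observation is that the $j'$-groups refine the $j$-groups. Indeed, if $L_k\setminus L_{j'}=L_i\setminus L_{j'}$, then removing $L_j\supseteq L_{j'}$ from both sides gives $R_j(k)=R_j(i)$. Hence $E_{j'}(i;P)\subseteq E_j(i;P)$. The intersection over the smaller group is larger, so the witness $h$ for $E_j(i;P)$ also serves for $E_{j'}(i;P)$. I expect this refinement argument to be the only genuinely non-bookkeeping step. With these choices every output language is inclusion-minimal, and \eqref{app:replay:eq:contain} holds with $L_{j'}$ in place of $L_j$.

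\emph{Step 2: finite states and profile-only input.}
After the first observation the learner's entire state is the pair $(j,C)$ with $j\in[N]$ and $C\subseteq[N]$. Before the first observation there is one initial state. This gives at most $1+N2^N$ states.

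Each transition depends on $x$ only through $\operatorname{prof}(x)$. The first step sets $j=j(\operatorname{prof}(x_0))$ and $C=\operatorname{prof}(x_0)$. In later steps, $x\notin L_j$ iff $j\notin\operatorname{prof}(x)$, and the update is $C\mapsto C\cap\operatorname{prof}(x)$. The output is a fixed function of the state: $j$ at round $0$, and afterwards the least-residual rule applied to $(j,C)$.

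\emph{Step 3: at most $N$ output changes.}
After round $0$ the output is a function of $C$ alone, with $j$ fixed. By the proof of Theorem~\ref{app:replay:thm:main}, $C$ only shrinks and never becomes empty on a legal text. Hence $C$ changes at most $|P|-1\le N-1$ times. Outputs can therefore change at most once at the transition from round $0$ to round $1$, and at most $N-1$ further times, for at most $N$ changes in total.

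Stabilization to a correct index is then exactly the convergence part of the theorem's proof. That argument applies unchanged with the normalized choices $j'$ and $h'$.
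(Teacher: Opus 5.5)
Your proposal is correct and follows essentially the same route as the paper's proof. The steps match: you normalize $j(P)$ and $h(E)$ to inclusion-minimal members using the refinement $E_{j'}(i;P)\subseteq E_j(i;P)$, you count one initial state plus the pairs $(j,C)$, and you bound output changes by one switch after round $0$ plus at most $|P|-1\le N-1$ strict shrinkings of $C$. One point deserves a more careful statement, which the paper also leaves implicit: the $1+N2^N$ count uses transducer semantics, with the round-$0$ output $j$ emitted on the transition out of the initial state, because the state $(j,P)$ alone does not distinguish round $0$ from a later round in which $C$ is still $P$.
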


\begin{proof}
For the state bound, there is one initial state before the first
observation. Thereafter the learner need only store $(j,C)$, where
$j\in[N]$ and $C\subseteq[N]$. The update tests $j\in
\operatorname{prof}(x)$ and, when this fails, replaces $C$ by
$C\cap\operatorname{prof}(x)$. All residual comparisons and the
choices $j(P)$ and $h(E)$ are fixed data of the finite family.
Thus a finite-state transducer can implement the rule with at most
$1+N2^N$ states. This is an existence statement for the fixed
family, rather than an assertion that an arbitrary representation
allows these data to be computed.

After the first output, the output is a fixed function of $(j,C)$.
Consequently it can change only when $C$ strictly decreases,
apart from a possible change on the first subsequent round when
the initial output $j$ is replaced by the choice for the same
candidate set. Because $C$ starts nonempty and always contains the
true target, it strictly decreases at most $|P|-1\le N-1$ times.
There are therefore at most $N$ output changes. Stabilization and
containment were proved in the theorem.

To obtain inclusion-minimal outputs, first consider any index $j$
that satisfies the theorem's condition for a profile $P$. Since
the family is finite, there is an inclusion-minimal member $L_{j'}$
of the family with $L_{j'}\subseteq L_j$. If two targets agree
outside $L_{j'}$, they also agree outside $L_j$, because
$U\setminus L_j\subseteq U\setminus L_{j'}$. Each group formed
using $j'$ is therefore contained in a group formed using $j$.
A language contained in every target of the larger group is also
contained in every target of the smaller one. Thus $j'$ satisfies
the same condition, and every first-output choice can be made
inclusion-minimal.

Likewise, whenever a group $E$ admits a common allowable language,
choose an inclusion-minimal member of the family below that
language. It is still contained in every target in $E$. Fix these
minimal choices globally as the selectors $h(E)$ and run the same
construction. Every output is now inclusion-minimal, while the
state bound, output-change bound, convergence proof, and containment
argument remain valid.
\end{proof}

%% file: supporting/appendix_p29_frontier.tex
\subsection{The exact deterministic mistake--deadline frontier on a staircase}
\label{app:p29-frontier}

In this subsection we prove Theorem~\ref{thm:staircase-frontier}, which
determines the optimal mistake and convergence guarantees for the staircase
family. This family generalizes the construction in Theorem~6.4
of~\citet[][P29]{mistake} by allowing arbitrary positive finite block sizes.
We compare generators on every possible target simultaneously: improving
one target's guarantee may worsen another's. The proof constructs generators
from a sequence of binary choices, derives matching lower bounds for
arbitrary generators, and shows that every resulting choice expresses an
optimal tradeoff.

\subsubsection{Languages, histories, and guarantees}
\label{app:p29-frontier-setup}
We first give a concrete realization of the family in
Section~\ref{sec:new-mathematics}. Fix positive integers $a_1,a_2,\ldots$. On
$X=\mathbb N_{>0}\times\mathbb N_{>0}$, put
\[
 C_i=\{(i,j):1\le j\le a_i\},\qquad
 P_i=\bigcup_{k=1}^{i}C_k,\qquad
 s_i=|P_i|=\sum_{k=1}^{i}a_k,
 \qquad s_0=0.
\]
Write $R_i=\{(i,j):j>a_i\}$ for the private remainder of column $i$,
and define
\[
 L_i=P_i\cup R_i
     =P_{i-1}\cup(\{i\}\times\mathbb N_{>0}).
\]
Each $L_i$ is infinite. For $k<i$, its intersection with $L_k$ is
$P_k$. A point of $C_k$ belongs to exactly the languages $L_i$ with
$i\ge k$, whereas a point of $R_k$ belongs only to $L_k$.

A legal history for target $L_i$ is a finite ordered list of distinct
points in $L_i$. After a history of length $t\ge0$, a generator $G$
outputs a point outside its observed set, before the next observation
arrives. A generator is a deterministic function of the entire ordered
history; no computability restriction is imposed. An output is
a mistake when it lies outside the fixed target. This definition imposes
freshness relative to observed inputs, without requiring the outputs to
be mutually distinct.

Let $M_i(G)\in\mathbb N\cup\{\infty\}$ be the supremum of the total
number of mistakes over injective positive streams in $L_i$. Let
$D_i(G)\in\mathbb N\cup\{\infty\}$ be the least nonnegative integer
$D$ such that $G$ is valid on every legal $L_i$-history of every length
$t\ge D$; set $D_i(G)=\infty$ if no such finite $D$ exists. In
particular, an error after exactly $s$ observations forces
$D_i(G)\ge s+1$, and $D_i(G)=0$ means that no legal history receives
an invalid output. Time is the number of distinct observations,
including time zero at the empty history.

The same worst-case quantities result if streams are required to
enumerate the entire target. Every finite injective positive history
extends to such an enumeration. Thus every finite collection of
mistakes witnessed on a positive stream is also witnessed on some
complete text, and every history used to test a deadline occurs on a
complete text.

The two quantities answer different questions: $M_i(G)$ bounds how many
errors can occur, while $D_i(G)$ bounds how late an error can occur.
We compare both quantities for all targets at once. Specifically, $G'$ dominates $G$ if
$M_i(G')\le M_i(G)$ and $D_i(G')\le D_i(G)$ for every $i$. A profile
is Pareto minimal if no generator dominates it with a strict inequality
in at least one coordinate. Thus a Pareto-minimal profile cannot be improved
on any target without worsening at least one of the guarantees.

\subsubsection{Schedule characterization and proof}
\label{app:p29-frontier-proof}
The relevant decisions occur when the observed set is exactly $P_k$.
At that point the possible targets are $L_k,L_{k+1},\ldots$, whose
intersection is $P_k$, so no unobserved point is valid for all of them.
Choosing a point in $R_k$ is correct only for $L_k$; choosing a point
in $C_{k+1}$ is correct for every later target. We record these choices
by a binary sequence $\sigma=(\sigma_k)_{k\ge1}$, with $\sigma_k=1$
for the first choice and $\sigma_k=0$ for the second. The construction
below makes no mistakes away from these decision points.

For target $L_i$, an earlier choice of $R_k$, with $k<i$, causes one
mistake if the observed set reaches $P_k$. The choice of $C_{i+1}$
causes a mistake at $P_i$. These observations motivate the candidate
mistake bound and the corresponding set of possible error times:
\[
 m_i(\sigma)=\sum_{k<i}\sigma_k+1-\sigma_i,
 \qquad
 E_i(\sigma)=\{s_k:k<i,\ \sigma_k=1\}
              \cup\{s_i:\sigma_i=0\}.
\]
The proposed deadline is one more than the latest of these times, or
zero if there are none:
\begin{equation}
\label{app:p29-frontier-deadline}
 d_i(\sigma)=
 \begin{cases}
  0,&E_i(\sigma)=\varnothing,\\
  1+\max E_i(\sigma),&E_i(\sigma)\ne\varnothing.
 \end{cases}
\end{equation}

The first part of the theorem shows that these formulas are attained.
The second proves that every generator has guarantees at least as large
as those of one such sequence, simultaneously for every target. The third
establishes that none of the resulting profiles can be improved further.

\begin{suppthm}[Exact deterministic schedule frontier]
\label{app:p29-frontier-theorem}
For the staircase family above, the following assertions hold.
\begin{enumerate}
\item For every binary schedule $\sigma$, there is a deterministic
generator $G_\sigma$ satisfying, simultaneously for all $i\ge1$,
\begin{equation}
\label{app:p29-frontier-exact}
 M_i(G_\sigma)=m_i(\sigma),\qquad
 D_i(G_\sigma)=d_i(\sigma).
\end{equation}
\item Every deterministic generator $G$ is dominated by a schedule:
there is one $\sigma$ such that, for all $i\ge1$,
\begin{equation}
\label{app:p29-frontier-domination}
 m_i(\sigma)\le M_i(G),\qquad d_i(\sigma)\le D_i(G).
\end{equation}
\item Every schedule profile is Pareto minimal. Consequently the
profiles in~\eqref{app:p29-frontier-exact} are exactly the deterministic
Pareto frontier.
\end{enumerate}
\end{suppthm}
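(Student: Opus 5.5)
For part (1), I will define $G_\sigma$ by its response to the observed set $H$, which is always a finite subset of some $L_i$. The set of compatible targets is $\{i: H\subseteq L_i\}$. If $H$ contains a point of $R_k$, the only compatible target is $L_k$, and the generator outputs a fresh point of $R_k$. Otherwise $H\subseteq P_\infty$. Let $c$ be the largest column met by $H$, or $c=0$ if $H$ is empty; the compatible targets are then $L_c,L_{c+1},\dots$ (with $L_1,L_2,\dots$ when $c=0$).

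\begin{itemize}
\item If $H\ne P_c$, some point of $P_c\setminus H$ is unobserved, and it lies in every compatible target, so the generator outputs it.
\item If $H=P_c$ with $c\ge1$, the generator outputs a point of $R_c$ when $\sigma_c=1$ and a point of $C_{c+1}$ when $\sigma_c=0$.
\item If $H=\varnothing=P_0$, it outputs a point of $C_1$, which lies in every language.
\end{itemize}

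Mistakes therefore occur only at decision histories $H=P_c$. For target $L_i$, the histories reachable this way are $P_c$ with $c\le i$. The history $P_c$ with $c<i$ produces a mistake exactly when $\sigma_c=1$, since $R_c\not\subseteq L_i$. The history $P_i$ produces a mistake exactly when $\sigma_i=0$, since $C_{i+1}\not\subseteq L_i$. A single injective stream can pass through all of $P_1\subset P_2\subset\dots\subset P_i$, which shows the count $m_i(\sigma)$ is attained. The error times are exactly $E_i(\sigma)$, because $|P_c|=s_c$, and this gives the deadline $d_i(\sigma)$.

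For part (2), given an arbitrary deterministic $G$, I cannot read $\sigma_k$ off a set, since $G$ sees ordered histories. I will fix one canonical ordering: let $h_k$ be the history listing $P_k$ column by column. I then set $\sigma_k=1$ if $G(h_k)\notin L_{k+1}$, and $\sigma_k=0$ otherwise.

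Fix a target $L_i$ and follow the single stream through $h_1,h_2,\dots,h_i$, which is legal for $L_i$.

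\begin{itemize}
\item For $k<i$ with $\sigma_k=1$: the output $G(h_k)$ is fresh and lies outside $L_{k+1}$. I will check that it is also outside $L_i$. A fresh point outside $L_{k+1}$ must lie in some $R_m$ with $m\ne k+1$ or in $C_m$ with $m\ge k+2$. Such a point can still belong to $L_i$, namely when it is in $R_i$ or in $C_m$ with $k+2\le m\le i$. \textbf{This is the main obstacle}, and the definition of $\sigma_k$ has to be adjusted to handle it.
\end{itemize}

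The fix is to split on whether $G(h_k)$ lies in $L_k$.

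\begin{itemize}
\item Set $\sigma_k=1$ if $G(h_k)\in L_k$. Since the output is fresh, it then lies in $R_k$. This is a mistake for every target $L_i$ with $i>k$, at time $s_k$, and it is correct for $L_k$.
\item Set $\sigma_k=0$ otherwise. The output is then a mistake for $L_k$ at time $s_k$, as needed for the term $\{s_k:\sigma_k=0\}$ in the count and deadline for $L_k$.
\end{itemize}

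With this choice, along the stream $h_1,\dots,h_i$ the generator $G$ errs on $L_i$ at time $s_k$ for each $k<i$ with $\sigma_k=1$, and at time $s_i$ if $\sigma_i=0$. These are distinct rounds of one stream, so $M_i(G)\ge m_i(\sigma)$. The latest of these error times, plus one, is at most $D_i(G)$, so $d_i(\sigma)\le D_i(G)$.

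For part (3), suppose some $G$ dominates $G_\sigma$. By part (2), $G$ is in turn dominated by some schedule $\tau$, so $m_i(\tau)\le m_i(\sigma)$ and $d_i(\tau)\le d_i(\sigma)$ for all $i$. The goal is to show $\tau=\sigma$; then all the inequalities are equalities, so no strict improvement is possible.

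Let $k$ be the first index with $\tau_k\ne\sigma_k$. Since $\tau$ and $\sigma$ agree below $k$, the counts satisfy
\[
m_k(\tau)-m_k(\sigma)=\sigma_k-\tau_k .
\]
Domination at $i=k$ then forces $\tau_k=1$ and $\sigma_k=0$.

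Now compare at a larger target $i>k$. Under $\tau$, the choice $\tau_k=1$ puts an error at time $s_k$ for every $i>k$. Under $\sigma$, the choice $\sigma_k=0$ removes it. I therefore expect to use either the mistake counts of targets $i>k$, or the deadline comparison $d_k$ together with later targets, to reach a contradiction. Pinning down exactly which target and which coordinate gives the contradiction is the second delicate step, and I will work it out by tracking the first disagreement through $m_i$ for $i>k$.
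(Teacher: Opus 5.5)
\textbf{Parts (1) and (2) are correct and follow the paper's proof.} Your generator $G_\sigma$ is the same one. Your corrected rule $\sigma_k=1\iff G(h_k)\in L_k$ is, by freshness at a history with observed set $P_k$, identical to the paper's rule $\sigma_k=1\iff G(h_k)\in R_k$. The single text $h_i$ followed by an enumeration of $R_i$ then witnesses every charged error. In a write-up, drop the abandoned $L_{k+1}$ rule. Also say explicitly that the $h_k$ use fixed internal column orders, so each $h_k$ is a prefix of $h_{k+1}$.

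\textbf{The gap is in part (3), where you stop just before the contradiction.} Your reduction is correct: you show that $m_i(\tau)\le m_i(\sigma)$ for all $i$ must force $\tau=\sigma$, and at the first disagreement $k$ you get $\tau_k=1$, $\sigma_k=0$. But no single later target gives the contradiction. At $i=k+1$, the extra error that $\tau$ incurs at $s_k$ can be cancelled by the own-boundary error that $\sigma$ incurs at $s_{k+1}$. The missing step is a two-stage propagation, and it needs mistake counts only, not deadlines. First,
\[
 m_{k+1}(\tau)-m_{k+1}(\sigma)=1-(\tau_{k+1}-\sigma_{k+1})
\]
is nonpositive only if $\tau_{k+1}=1$ and $\sigma_{k+1}=0$. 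But then
\[
 m_{k+2}(\tau)-m_{k+2}(\sigma)=2-(\tau_{k+2}-\sigma_{k+2})\ge 1,
\]
which contradicts domination at $i=k+2$.

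Once $\tau=\sigma$, the chains $m_i(\sigma)\le M_i(G)\le m_i(\sigma)$ and $d_i(\sigma)\le D_i(G)\le d_i(\sigma)$ rule out any strict improvement. You also never address the ``consequently'' clause. To get it, apply (2) to an arbitrary Pareto-minimal $G$: this produces a $G_\sigma$ dominating it, so minimality of $G$ forces its profile to equal that of $G_\sigma$.
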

\begin{proof}
\emph{Attaining the proposed guarantees.}
We first define $G_\sigma$ on every finite history and then prove the
two equalities in~\eqref{app:p29-frontier-exact}.
Choose fixed orders for making all the selections below. If a history
contains a point of $R_i$, any compatible target must be $L_i$.
At such a realizable history, output any unobserved point of $L_i$.
If a history is unrealizable, output any unobserved point of $X$;
this branch is irrelevant to targetwise guarantees but makes the
generator total and fresh.

It remains to define the generator before a private point appears.
At the empty history, output a point of $C_1$, which belongs to every
target. At a nonempty history consisting only of finite-block points,
let $k$ be the largest observed column index and let $S$ be its
observed set. The compatible targets are precisely
$L_k,L_{k+1},\ldots$, and their intersection is $P_k$. Indeed, the
observed point of $C_k$ excludes every earlier target, while every
target of index at least $k$ contains all the observed points. Their
common core is $P_k$ because $L_k\cap L_{k+1}=P_k$.

If $S\ne P_k$, output a point of $P_k\setminus S$. It is fresh and
valid for every compatible target. If $S=P_k$, no fresh point is valid
for all compatible targets. When $\sigma_k=1$, output a point of
$R_k$, valid only for $L_k$. When $\sigma_k=0$, output a point of
$C_{k+1}$, valid for every later target and invalid for $L_k$. Both
choices are fresh.

Fix a target $L_i$. Its stream can encounter a boundary $S=P_k$
only for $k\le i$, and it can encounter each such boundary at most
once because observations are injective. Every output away from
these boundaries is valid. At a boundary $k<i$, a mistake occurs
exactly when $\sigma_k=1$; at boundary $k=i$, a mistake occurs
exactly when $\sigma_i=0$. The respective observed counts are $s_k$.
This proves $M_i(G_\sigma)\le m_i(\sigma)$ and
$D_i(G_\sigma)\le d_i(\sigma)$.

To attain these bounds, present $C_1,C_2,\ldots,C_i$ in fixed
internal orders, followed by an enumeration of $R_i$. This is an
injective complete text of $L_i$, and it encounters every boundary
$P_k$ for $k\le i$. It therefore realizes exactly the mistakes in
$E_i(\sigma)$. If that set is nonempty, its latest mistake occurs
after $\max E_i(\sigma)$ observations, forcing deadline at least
$1+\max E_i(\sigma)$. If it is empty, the preceding upper bound
already gives deadline zero. This establishes both equalities
in~\eqref{app:p29-frontier-exact}.

\emph{Lower bounds for an arbitrary generator.}
We next show that allowing more elaborate choices, including dependence
on the order of observations, cannot improve on all the profiles above.
Fix a deterministic fresh generator $G$. For each $k$, let $h_k$
be the ordered concatenation of $C_1,\ldots,C_k$ using the same
fixed internal orders. Define
\[
 \sigma_k=1\quad\Longleftrightarrow\quad G(h_k)\in R_k.
\]
Freshness gives $G(h_k)\notin P_k$. Hence if $\sigma_k=0$,
the output at $h_k$ is outside $L_k=P_k\cup R_k$.

For a fixed target $L_i$, extend $h_i$ by the private remainder
$R_i$ to obtain one complete $L_i$-text. At every earlier boundary
$h_k$ with $\sigma_k=1$, the output belongs to $R_k$ and is
invalid for $L_i$. At its own boundary $h_i$, the output is invalid
if $\sigma_i=0$. Thus this one text witnesses all
$m_i(\sigma)$ charged mistakes, at exactly the observed counts
in $E_i(\sigma)$. The resulting mistake and deadline lower bounds
are~\eqref{app:p29-frontier-domination}; other errors made by $G$
can only increase those quantities.

The infinite concatenation of all finite blocks is used only to
specify a consistent family of finite histories. It is not asserted
to be a text of any one target. Each lower-bound argument above
fixes $i$ and uses the legal prefix $h_i$ followed by a legal
continuation for that same target.

\emph{Optimality of every schedule.}
The lower bound shows that the schedules suffice to find all optimal
guarantees. We must still prove that every schedule is itself optimal.
First, no two distinct schedules have coordinatewise ordered
mistake vectors. Suppose otherwise that
$m_i(\tau)\le m_i(\sigma)$ for every $i$, and let $k$ be the
first index where $\tau_k\ne\sigma_k$. The inequality at $i=k$
forces $\tau_k=1$ and $\sigma_k=0$. At $i=k+1$, the difference
of the mistake counts is
\[
 m_{k+1}(\tau)-m_{k+1}(\sigma)
 =1-(\tau_{k+1}-\sigma_{k+1}).
\]
For this to be nonpositive, we must also have
$\tau_{k+1}=1$ and $\sigma_{k+1}=0$. But then
\[
 m_{k+2}(\tau)-m_{k+2}(\sigma)
 =2-(\tau_{k+2}-\sigma_{k+2})\ge1,
\]
a contradiction. Therefore $\tau=\sigma$ whenever all the
mistake inequalities hold.

Now suppose an arbitrary $G$ dominates $G_\sigma$. The reduction
just proved supplies a schedule $\tau$ with
$m_i(\tau)\le M_i(G)\le m_i(\sigma)$ for every $i$, so
$\tau=\sigma$. Applying the deadline part of the same reduction
gives $d_i(\sigma)\le D_i(G)\le d_i(\sigma)$, while the mistake
inequalities also become equalities. No coordinate can improve
strictly. Thus every schedule profile is Pareto minimal. Conversely,
every Pareto-minimal generator must have the same profile as its
dominating schedule, proving the final assertion.
\end{proof}

The two constant schedules illustrate the tradeoff. Always choosing
$C_{k+1}$ makes no mistakes at the earlier decision points for target
$L_i$, but makes one at $P_i$. Its guarantees are therefore $M_i=1$
and $D_i=s_i+1$. Always choosing $R_k$ avoids the mistake at $P_i$
but makes one at each earlier decision point, giving $M_i=i-1$ and
$D_i=s_{i-1}+1$ for $i\ge2$, with $M_1=D_1=0$.
Theorem~\ref{app:p29-frontier-theorem} determines all intermediate
choices as well. Its lower bound applies to every deterministic generator,
and its minimality statement shows that each listed profile is an optimal
tradeoff across the targets, completing the special-case characterization.

%% file: supporting/appendix_autoformalization.tex
\section{Library-Assisted Lean Proof Generation: Additional Experiment Details and Results}
\label{app:autoformalization}

This appendix provides additional experiment details and results for the Lean proof generation experiments in Section \ref{sec:autoformalization}.

\subsection{Additional Formalization-Evaluation Details}
\label{app:autoformalization_more_details}

\subsubsection{Generation and selection of theorem tasks}

The five theorem tasks used in our experiments in Section \ref{sec:autoformalization} were generated from an
upstream ideation task over the same five-paper research neighborhood.
The ideation model was asked to propose six mathematically distinct theory contributions per run, each connecting the anchor paper P39 with at least one of the four companion papers P06, P12, P17, or P19.  
For every proposal, it had to state a precise candidate theorem or counterexample, identify the closest supplied
results from the papers or the research library, outline a proof or counterexample route, state the main risk, and stress-test at least one boundary case. 
It then self-selected at most three strongest finalists. 
We generated research proposals using Codex CLI 0.153.1 with the gpt-5.6-sol model, reasoning effort set to high, and output verbosity set to medium.
Nine ideation runs used Codex CLI 0.153.1 with \texttt{gpt-5.6-sol} reasoning effort high and produced 27 run-level proposals.

The finalist proposals were blind-reviewed by a separate \texttt{ChatGPT-5.6 Pro} reviewer. 
We rejected false and insufficiently supported proposals and proposals that were immediate consequences of results in the supplied papers or the formal library. 
Five surviving proposals were developed into the fixed tasks Case 002, Case 017, Case 024, Case 025, and Case 019 (corresponding to Tasks A, B, C, D, E in Section \ref{sec:autoformalization}, respectively).  
Thus, the tasks are newly generated and were not available to the models at the time of experiment evaluation.

\subsubsection{Natural-language proofs and fixed Lean targets}

Before a theorem was admitted to the LLM formalization evaluation, we prepared (i) a natural-language theorem statement, (ii) a complete natural-language proof, and (iii) a faithful Lean theorem statement.  
The Lean theorem statement used our Lean library vocabulary whenever its semantics matched the prose statement exactly. 
Finally, we obtained a controller-private complete Lean proof to guarantee that a successful Lean proof is attainable, and provided a task-specific Lean library subset for the experiment condition PML-Oracle.
The controller-private proof was never included in an LLM author-visible packet.
Table \ref{tab:task-resource-counts} lists the task-specific formal resources.

\begin{table}[t]
  \centering
  \small
  \caption{Task-specific formal resources. Vocabulary files are used for Lean theorem statements and are provided for all conditions.  PML-Oracle files are oracle-selected Lean files for the condition PML-Oracle, which form a subset of the 161-file frozen sub-library.}
  \label{tab:task-resource-counts}
  \begin{tabular}{@{}lcc@{}}
    \toprule
    Task & Vocabulary files & PML-Oracle files \\
    \midrule
    Case 002      & 6 & 4  \\
    Case 017 & 8 & 10 \\
    Case 024 & 6 & 13 \\
    Case 025 & 8 & 44 \\
    Case 019 & 8 & 59 \\
    \bottomrule
  \end{tabular}
\end{table}

\subsubsection{Formalization prompt and author-visible context}

The top-level LLM author prompt was identical across P, PML, and PML-Oracle, with theorem-specific namespaces and output filenames.  In the with-proof conditions, the prompt template was:

\begin{verbatim}
Read AGENTS.md, SOURCE_CONFIG.md, TASK_STAGE3.md,
THEOREM_STATEMENT.md, CANONICAL_FULL_PROOF.md, and
Stage3Model.lean. Consult any additional supplied source
material, TargetTemplate.lean, and LEAN_CHECK.md selectively.
Implement the exact stage3_result : <MainClaim>, adding local
helpers as needed. Check complete entry-point versions before
optional refactoring. Leave the final Lean source and a brief
STATUS.md. You have 90 minutes including tools and compilation.
Work autonomously and do not ask questions.
\end{verbatim}

For the no-proof conditions, the prompt was identical except that
\texttt{CANONICAL\_FULL\_PROOF.md} and the instruction to read it were removed.
Reasoning effort (high or medium) was set in the inference configuration and
was not described in the natural-language task prompt.

The referenced task file fixed the root declaration to the following schema:

\begin{lstlisting}[style=lean]
import Stage3Model

theorem stage3_result : <MainClaim> := by
  sorry
\end{lstlisting}

The LLM author had to replace the \texttt{sorry} placeholder without changing the shared model, weakening the conclusion, or adding assumptions.  
It could create local helper files in \texttt{output/} and repeatedly call the supplied Lean checker.
The task requested a final Lean entry point and a short diagnostic status file; the status text was not used to determine success.  
LLM authors were instructed to save the strongest checked partial development if they could not finish and
were prohibited from using \texttt{sorry}, \texttt{admit}, new axioms, unsafe code, kernel bypasses, external files, prior runs, network access, or other agents.

All conditions received the same five papers, theorem statement, proof-availability setting, exact Lean target, Mathlib environment, task text, checker policy, and 90-minute author budget.

All experiments were run on CPU nodes of the Delta high-performance computing cluster.
Model inference was run through OpenRouter.

\subsection{Additional Experiment Results}
\label{app:stage3-additional-results}

In this section, we report additional experiment results. We further disaggregate success by theorem task and report additional diagnostics on API cost and resource usage. 

\subsubsection{Full task-level success}

Table~\ref{tab:stage3-full-task-success} gives the complete success counts across theorem tasks, resource conditions, proof availability, and reasoning effort.
Across all conditions, PML produces 79 valid proofs out of 100 trajectories, compared with 48/100 for P.
This advantage appears across multiple theorem tasks and persists both with and without a supplied natural-language proof.

The clearest example is Case 025, where P fails in all 20 trajectories while PML succeeds in all 20.
Table~\ref{tab:stage3-results}(b) suggests why: Case 025 is closely aligned with existing library machinery, and successful PML proofs reuse a large fraction of proof-relevant library declarations, leaving mainly task-specific interface lemmas to be established.
In contrast, P must reconstruct the underlying arguments from the minimal project vocabulary.
A similar pattern appears for Case 019: P again has no successful trajectory, whereas PML succeeds in 5/20 runs and PML-Oracle in 9/20.
PML-Oracle, which provides an idealized retrieval ceiling, achieves slightly higher success overall and a larger gain on Case~019.

\begin{table}[t]
  \centering
  \small
  \setlength{\tabcolsep}{8pt}
  \caption{Full task-level success.
  Each entry is the number of successful trajectories out of five independent runs. The tasks Case 002, 017, 024, 025, 019 correspond to Tasks A, B, C, D, E in Section \ref{sec:autoformalization}, respectively.}
  \label{tab:stage3-full-task-success}
  \begin{tabular}{@{}lccc@{}}
    \toprule
    Setting & P & PML & PML-Oracle \\
    \midrule

    \multicolumn{4}{@{}l}{\textbf{Case 002}} \\
    Proof / High      & 5 & 4 & 5 \\
    No proof / High   & 4 & 5 & 5 \\
    Proof / Medium    & 2 & 2 & 2 \\
    No proof / Medium & 2 & 4 & 3 \\

    \addlinespace
    \multicolumn{4}{@{}l}{\textbf{Case 017}} \\
    Proof / High      & 5 & 5 & 5 \\
    No proof / High   & 5 & 5 & 5 \\
    Proof / Medium    & 4 & 5 & 4 \\
    No proof / Medium & 5 & 4 & 5 \\

    \addlinespace
    \multicolumn{4}{@{}l}{\textbf{Case 024}} \\
    Proof / High      & 5 & 5 & 5 \\
    No proof / High   & 5 & 5 & 5 \\
    Proof / Medium    & 4 & 5 & 5 \\
    No proof / Medium & 2 & 5 & 3 \\

    \addlinespace
    \multicolumn{4}{@{}l}{\textbf{Case 025}} \\
    Proof / High      & 0 & 5 & 5 \\
    No proof / High   & 0 & 5 & 5 \\
    Proof / Medium    & 0 & 5 & 5 \\
    No proof / Medium & 0 & 5 & 5 \\

    \addlinespace
    \multicolumn{4}{@{}l}{\textbf{Case 019}} \\
    Proof / High      & 0 & 3 & 5 \\
    No proof / High   & 0 & 2 & 4 \\
    Proof / Medium    & 0 & 0 & 0 \\
    No proof / Medium & 0 & 0 & 0 \\

    \bottomrule
  \end{tabular}
\end{table}

\subsubsection{API cost}
For any collection of trajectories, we define failure-adjusted API cost as
\[
  \frac{\text{total API cost over the trajectories}}
       {\text{number of kernel-validated proofs}}.
\]
Failed trajectories are included in the numerator. The quantity is undefined when no trajectory in the group succeeds.
Table~\ref{tab:stage3-cost-breakdown} reports this measure by theorem task and by proof-availability/reasoning setting.

Across the full experiment, PML reduces the failure-adjusted API cost per valid proof from \$2.76 for P to \$1.95, with PML-Oracle reducing it further to \$1.78. This pattern is consistent across all factorial settings: PML has lower cost than P for every combination of proof availability and reasoning effort. The task-level results show the same general pattern whenever both conditions produce valid proofs.

\begin{table}[t]
\centering
\small
\caption{Failure-adjusted API cost per valid proof.
Panel A pools over proof availability and reasoning effort within each theorem task.
Panel B pools over theorem tasks within each experimental setting. The tasks Case 002, 017, 024, 025, 019 correspond to Tasks A, B, C, D, E in Section \ref{sec:autoformalization}, respectively.}
\label{tab:stage3-cost-breakdown}
\begin{tabular}{@{}lccc@{}}
\toprule
 & P & PML & PML-Oracle \\
\midrule
\multicolumn{4}{@{}l}{\textbf{Panel A: By theorem task}} \\
Case 002      & \$2.28 & \$2.06 & \$1.91 \\
Case 017  & \$1.56 & \$1.13 & \$0.99 \\
Case 024  & \$2.05 & \$0.89 & \$0.95 \\
Case 025  & --     & \$1.43 & \$1.34 \\
Case 019  & --     & \$11.10 & \$5.90 \\
\addlinespace
\multicolumn{4}{@{}l}{\textbf{Panel B: By proof availability and reasoning effort}} \\
Proof / High      & \$2.62 & \$2.05 & \$1.84 \\
No proof / High   & \$2.76 & \$1.98 & \$1.68 \\
Proof / Medium    & \$2.98 & \$1.89 & \$1.94 \\
No proof / Medium & \$2.77 & \$1.87 & \$1.70 \\
\midrule
Overall           & \$2.76 & \$1.95 & \$1.78 \\
\bottomrule
\end{tabular}
\end{table}

\subsubsection{Resource usage}
Table~\ref{tab:stage3-resource-per-run} reports resource consumption, including tokens and wall-clock time. Each quantity
is normalized by the number of scheduled 100 trajectories per condition, including both successful and failed runs.

All three resource conditions process a similar amount of input tokens per trajectory.
The resource files were available on disk and entered model
context only when the LLM agent chose to inspect them, while the input-token measure accumulates context across many API calls and includes cached prefixes. 
PML has more resource files than P (and thus a higher cache-write count), but uses fewer API calls.
PML uses fewer reasoning tokens and less wall-clock time.

\begin{table*}[t]
  \centering
  \footnotesize
  \setlength{\tabcolsep}{4pt}
  \caption{Aggregated resource usage per scheduled trajectory.  Each resource total
  is divided by the 100 total scheduled trajectories in the corresponding arm. Input tokens are cumulative across API calls and include cached prefixes; cache-write tokens measure newly cached
  prompt prefixes.  Author time excludes scheduler queueing, setup, final validation, and packaging.}
  \label{tab:stage3-resource-per-run}
  \begin{tabular*}{\textwidth}{@{\extracolsep{\fill}}lcccccc@{}}
    \toprule
    Condition &
    \shortstack[c]{Valid\\proofs} &
    \shortstack[c]{API calls\\per run} &
    \shortstack[c]{Input tokens\\per run} &
    \shortstack[c]{Cache-write\\tokens per run} &
    \shortstack[c]{Reasoning\\tokens per run} &
    \shortstack[c]{Author time\\per run} \\
    \midrule
    P              & 48/100 & 55.6 & 3.122M & 166.6K & 11.3K & 19.3 min \\
    PML            & 79/100 & 51.9 & 3.121M & 253.8K & 9.33K & 15.7 min \\
    PML-Oracle & 81/100 & 53.6 & 3.184M & 210.2K & 9.94K & 17.6 min \\
    \bottomrule
  \end{tabular*}
\end{table*}

%% file: supporting/appendix_experiments.tex
\section{Mathematical-Reading Experiments}
\label{app:experimental-documentation}
\label{app:experiment-details}
This appendix gives the experimental setup and accuracy calculation for
the mathematical-reading experiments in Section~\ref{sec:library-value}.

\label{app:reading-runtime}
Advanced models, including Claude Opus 5 Max and GPT-6 Astra Xhigh,
drafted five-option questions about the source papers and library. A separate
audit using GPT-6 Astra Ultra through Codex checked the answers and supporting
sources. Of 1,401 candidate questions,
1,193 were retained: 825 about 16 individual papers and 368 from a five-paper
collection. The audit excluded 206 questions judged incorrect and two judged unverifiable.

The individual-paper bank covers P01 \citep{km2024}, P02 \citep{li2024generation}, P03 \citep{kalavasis2025limits}, P04 \citep{charikar2024facets}, P05 \citep{kalavasis2025characterizations}, P06 \citep{ananth2025generation},
P08 \citep{karbasi2025detection}, P09 \citep{prr2025}, P10 \citep{hanneke2025union}, P12 \citep{bai2026noise}, P17 \citep{mehrotra2026language}, P19 \citep{li2026characterizing}, P23 \citep{kleinberg2026banach}, P28 \citep{li2026contrastive}, P31 \citep{memory}, and P39 \citep{cai2026dense}.
For the map study, we use the five-paper subset of the above 16 papers on generation and noise: P02, P06,
P12, P17, and P19.

For individual papers, we compared the question alone, paper excerpts, and
excerpts supplemented with Lean definitions and theorem statements. Relevant
Lean was selected using the question's recorded source declaration, followed
by lexical matching; Lean in file order used the original source ordering.
Unrelated Lean supplied a control with similar formatting and character length.
The added Lean targeted 6,000 characters. For the five-paper collection, all
conditions supplied the same excerpts, with no map, the relevant map, or a sham
map. The relevant map summarized theorems and relationships obtained from
library imports and dependencies. The sham contained Lean statements from
outside the five-paper collection, with the same heading and character length
(19,102 characters).

The reader was Qwen3.6-27B, run in BF16 with vLLM on one H100 80GB GPU.
Thinking was disabled and temperature was zero. Each request supplied a fixed
text packet; the reader selected an option using the first-token
log-probabilities of A--E, without tools or a Lean compiler.

We first screened questions using two cyclic answer-order rotations without
supporting material, retaining questions with accuracy at most one half.
Evaluation used three different rotations, with the same schedule for every
information condition on a given question. This left 404 individual-paper
questions and 204 questions from the five-paper collection.

\paragraph{Accuracy calculation.}
\label{app:reading-analysis}
For each question, we average correctness over the three evaluation rotations.
To avoid giving repeated questions about the same source result extra weight,
we group questions with the same recorded paper identifiers and source label
(such as a theorem or definition number); a question without a source label
forms its own group. We average the question scores within each group and
then average equally across groups. Within each study, the same questions,
rotations, and groups are used in every information condition.